\documentclass[11pt]{article}

\usepackage{acl}
\usepackage{amsthm}
\definecolor{lightgreen}{RGB}{230, 255, 230}
\definecolor{lightyellow}{RGB}{255, 255, 230}
\definecolor{lightred}{RGB}{255, 230, 230}
\definecolor{lightblue}{RGB}{230, 240, 255}
\usepackage[table]{xcolor}
\usepackage{booktabs, siunitx, xcolor}
\usepackage{makecell}
\usepackage{geometry} 
\usepackage{microtype} 
\usepackage{lipsum}    
\usepackage{enumitem}  
\usepackage{url}       
\usepackage{changes}   
\usepackage{xspace}    
\usepackage{float}

\usepackage{amsmath}    
\usepackage{amssymb}    
\usepackage{mathtools}  
\usepackage{calc}       

\usepackage{booktabs}       
\usepackage{tabularx}       
\usepackage{graphicx}       
\usepackage{caption}        
\usepackage{subcaption}     
\usepackage{pgfplots}       
\usepackage{pgfplotstable}  
\pgfplotsset{compat=1.17}   

\usepackage{tikz} 
\usepackage{pgfplots} 
\pgfplotsset{compat=1.18} 

\usepackage{algorithm}     
\usepackage{algorithmicx}  
\usepackage{algpseudocode} 
\usepackage{url}

\usepackage{amsmath, amssymb}
\usepackage{xcolor}
\usepackage{graphicx}

\usepackage{tikz}
\usetikzlibrary{arrows.meta,calc,patterns,positioning}
\usepackage{lmodern}
\usepackage{pgfplots}
\pgfplotsset{compat=1.18} 
\usepgfplotslibrary{polar} 
\usepackage{siunitx}       
\usepackage{booktabs}      
\usepackage[table]{xcolor} 

\newtheorem{lemma}{Lemma}[section] 

\usepackage{amsmath} 

\usepackage{titlesec}
\titleformat{\section}{\bfseries\large}{\thesection}{1em}{}
\titlespacing*{\section}{0pt}{3pt}{2pt}  
\titlespacing*{\subsection}{0pt}{2pt}{1pt}

\usepackage{amsmath,amssymb}
\usepackage{booktabs}
\usepackage[utf8]{inputenc}
\usepackage[T1]{fontenc}
\usepackage{enumitem} 

\theoremstyle{plain}
\newtheorem{theorem}{Theorem}[section]

\newcommand{\E}{\mathbb{E}}
\newcommand{\grad}{\nabla}
\newcommand{\Cov}{\text{Cov}}
\newcommand{\Hent}{H}
\newcommand{\pitheta}{\pi_\theta}
\newcommand{\thetanew}{\theta_{\text{new}}}
\newcommand{\thetaold}{\theta}
\newcommand{\aaction}{a}
\newcommand{\sstate}{s}
\newcommand{\Aadv}{A}
\newcommand{\phiscore}{\phi}
\newcommand{\Ffisher}{\mathcal{F}}

\newcommand{\method}{\texttt{TEPO}\xspace}

\title{Token-Level Policy Optimization: Linking Group-Level Rewards to Token-Level Aggregation via Sequence-Level Likelihood}
\author{Xingyu Lin$^{1,2,3}$,
  Yilin Wen$^{3}$,
  Du Su$^{5}$,
  Jinchang Hou$^{3}$,\\
  \textbf{En Wang}$^{1,2\ \ast}$,
  \textbf{Wenbin Liu}$^{1,2\ \ast}$,
  \textbf{Chenfu Bao}$^{3,4\ \ast}$,
  \textbf{Zhonghou Lv}$^{3}$
  \thanks{These authors contributed equally to this work.\\ Corresponding authors: En Wang (enwang@jlu.edu.cn),  Wenbin Liu (liuwenbin@jlu.edu.cn), Chenfu Bao (chenfubao@baidu.com,  bcf25@mails.tsinghua.edu.cn), Zhonghou Lv (zhonghoulv@baidu.com)}
  \\
  $^1$College of Computer Science and Technology, Jilin University \\
  $^2$Key Laboratory of Symbolic Computation and Knowledge Engineering of MOE, Jilin University \\
  $^3$Baidu Inc. $^4$Tsinghua University.\\
  $^5$State Key Laboratory of AI Safety, Institute of Computing Technology, Chinese Academy of Sciences \\ \ \\ \ \\
}

\begin{document}

\maketitle
\begin{abstract}
Group Relative Policy Optimization (GRPO) has significantly advanced the reasoning ability of large language models (LLMs), particularly in their mathematical reasoning performance. However, GRPO and related entropy regularization methods still struggle with token-level sparse-rewards, which is an inherent challenge in chain-of-thought (CoT) reasoning. These approaches often rely on undifferentiated token-level entropy regularization, which easily leads to entropy collapse or model degradation under sparse token rewards. In this work, we propose \method, a novel token-level framework that (1) leverages sequence-level likelihood to link group-level rewards with individual tokens via token-level aggregation, and (2) introduces a token-level KL-Divergence mask constraint that targets tokens with positive advantages and decreasing entropy to mitigate abrupt policy updates. Experiments demonstrate that \method not only achieves state-of-the-art performance on mathematical reasoning benchmarks but also markedly enhances training stability, reducing convergence time by 50\% compared with GRPO/DAPO.
\end{abstract}
\section{Introduction}
GRPO\cite{shao2024deepseekmath}  has significantly advanced the reasoning ability of large language models (LLMs), particularly in mathematical reasoning. However, in CoT reasoning, learning is fundamentally challenged by sparse token-level rewards, under which GRPO and related entropy-regularized methods often struggle. Specifically, these approaches rely on undifferentiated token-level entropy regularization, which can lead to entropy collapse or policy degradation when rewards are sparse\cite{yu2025dapo}.
\begin{figure}[!t]
    \includegraphics[width=\linewidth]{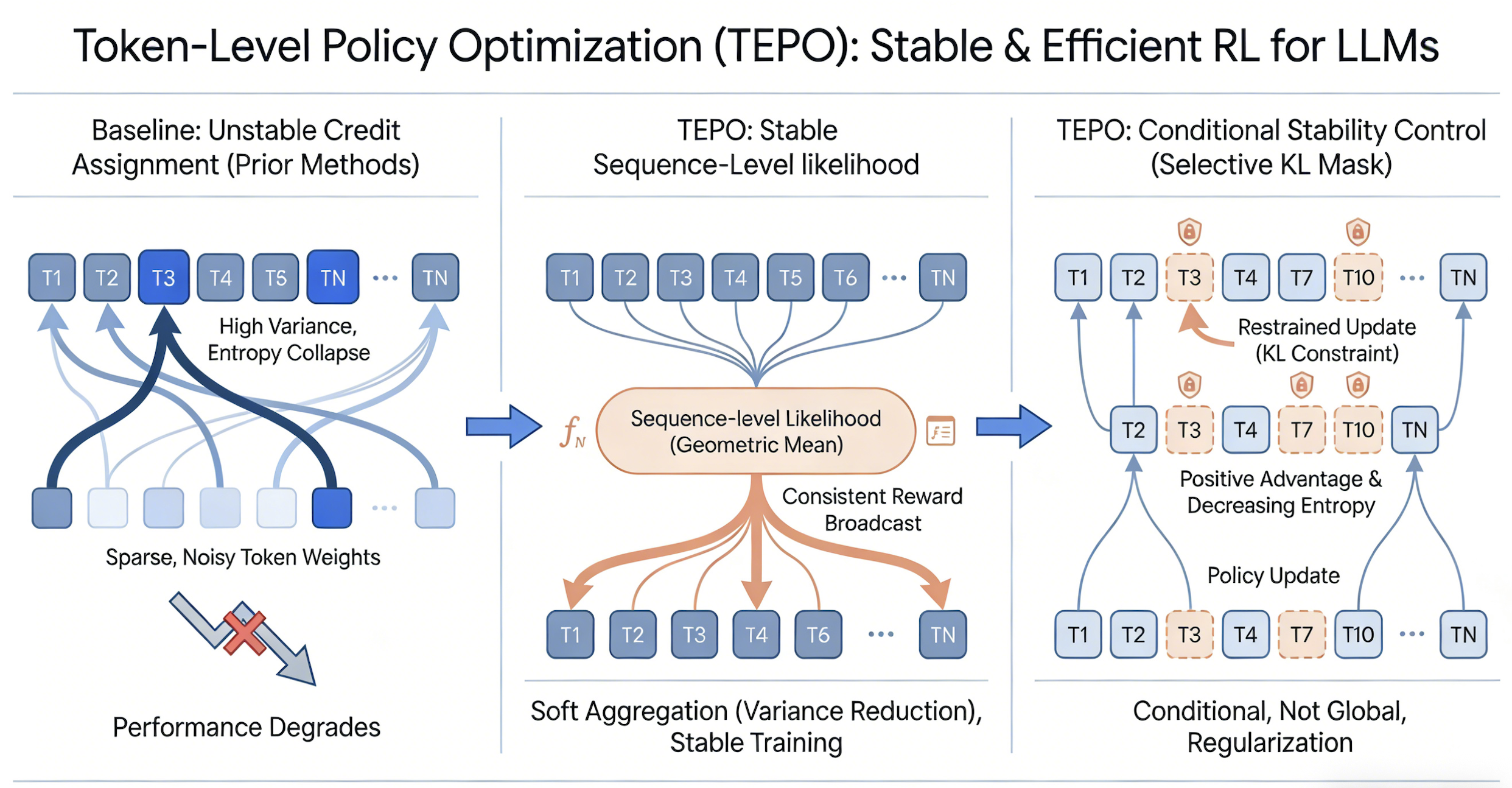}
    \caption{Overview of the \method Framework
    \method (1) replaces baselines’ noisy, sparse token-level credit assignment with sequence-level likelihood, using soft aggregation to broadcast group rewards to tokens and stabilize training. (2) A selective KL mask curbs abrupt updates exclusively for tokens with positive advantage and decreasing entropy, balancing entropy reduction and stability.}
    \label{fig:Overview_of_tepo_Framework}
\end{figure}
For entropy regularization, methods either minimize entropy to ensure credible outputs \cite{agarwal2025unreasonable} or maximize it to enhance exploration; however, such undifferentiated entropy adjustments yield only marginal improvements. For KL-Divergence, this regularization strategy is fragile and degrades final performance without extensive parameter tuning, manifesting as entropy collapse or model collapse \cite{gspo2025}. 

\textbf{Core Insight:} These challenges are rooted in the inherent sparse-reward of CoT reasoning, which creates instability that GRPO’s critic-free framework struggles to mitigate. The absence of a critic in GRPO \cite{shao2024deepseekmath}, combined with the token-level sparse-reward nature of long-chain reasoning tasks, exacerbates its susceptibility to high-variance gradient estimates. Policies exploring novel CoT structures often diverge substantially from their initial distribution \cite{cheng2025reasoning}, leading to cumulative noise across extended reasoning sequences \cite{zheng2025group}. Furthermore, sole reliance on undifferentiated entropy regularization or KL-Divergence exacerbates this issue by triggering model collapse in sparse-reward CoT settings \cite{chu2025gpg, gao2025one, zheng2025group}, further undermining training stability.

To address this token-level sparse-reward, we propose \method, a  token-level framework designed to align group-level rewards with token-level credit assignment. \method (1) leverages sequence-level likelihood to bridge group-level rewards with individual tokens via token-level aggregation, and (2) introduces a token-level KL-Divergence mask (applied to tokens with positive advantage and decreasing entropy) to mitigate abrupt policy updates. Notably, \cite{cui2025entropy} establishes a well-documented relationship between model performance ($R$) and policy entropy ($\mathcal{H}$): $R = -a \cdot \exp(\mathcal{H}) + b$ , revealing that performance improvements are fundamentally achieved through systematic entropy reduction. By resolving state distribution shift  with sequence-level likelihood(Section \ref{sec:Sequence-level Likelihood}), \method enables trading larger $\Delta\mathcal{H}$ for better downstream performance. Additionally, the token-level KL-Divergence mask mitigates rapid entropy decay. Our key contributions are summarized as follows:
\begin{itemize}[leftmargin=0.8em, itemsep=0pt, parsep=0pt, topsep=0pt, labelsep=0.3em]
\item \textbf{Token-Level Policy Optimization}: We propose a highly efficient and adaptive token-level optimization strategy tailored to critic-free paradigms. Notably, our method achieves peak performance in only 72 optimization steps, while GRPO/DAPO require 132 steps to reach comparable performance, reducing convergence time by nearly 50\%.
\item \textbf{Analysis of KL-Divergence and Entropy Regularization in GRPO}: We provide both theoretical and empirical evidence demonstrating that undifferentiated KL-Divergence and entropy regularization struggle in sparse-reward settings.
\item \textbf{Comprehensive Experimental Validation}: Our method achieves an approximately 2\% improvement in average accuracy over the baseline GRPO. We further conduct ablation studies to verify that GRPO/DAPO and related entropy regularization methods exhibit performance bottlenecks under token-level sparse-reward conditions, and the results confirm the effectiveness of \method. 
\end{itemize}
\section{Preliminaries in LLMs}
We list all core notations in Table \ref{tab:notations}.
\subsection{Entropy Gradient Derivation}
In LLMs, the state $s$ corresponds to the prompt context, and an action $a$ refers to a token from the vocabulary \(\mathcal{A}\). For a state s and action a, the policy is defined as follows:
\[
\pi_\theta(a \mid s) = \frac{\exp\!\left(\phi_\theta(s,a)\right)}{\sum_{a' \in \mathcal{A}} \exp\!\left(\phi_\theta(s,a')\right)},
\]
which is a softmax function.  Here, $\phi_\theta(s,a)\in\mathbb{R}$ is the token logit. The entropy of policy distribution $\pi_\theta(\cdot \mid s)$ measures its uncertainty: \[ \mathcal{H}\left(\pi_\theta(\cdot \mid s)\right) = -\sum_{a} \pi_\theta(a \mid s) \log \pi_\theta(a \mid s). \] Applying the chain rule yields $ \frac{\partial\mathcal{H}}{\partial \phi_\theta(a_i \mid s)}$: 
\begin{equation} \label{eq:entropy_gradient}  
\pi_\theta(a_i \mid s) \left( \log \pi_\theta(a_i \mid s) + \mathcal{H}\left(\pi_\theta(\cdot \mid s)\right) \right). 
\end{equation}

The partial derivatives are given by: 
\[\small \frac{\partial \pi_\theta(a_i \mid s)}{\partial \phi_\theta(s, a_i)} = \begin{cases} \pi_\theta(a \mid s) \left(1 - \pi_\theta(a \mid s)\right), & a = a_i \\ -\pi_\theta(a \mid s) \pi_\theta(a_i \mid s), & a \neq a_i \end{cases}, \]
\[ \frac{\partial \log \pi_\theta(a_i \mid s)}{\partial \phi_\theta(s, a_i)} = \begin{cases} 1 - \pi_\theta(a_i \mid s), & a = a_i \\ -\pi_\theta(a_i \mid s), & a \neq a_i \end{cases}. \]
\subsection{Policy Gradient Derivation}
Proximal Policy Optimization (PPO) \cite{schulman2017proximal} and RLVR \cite{lambert2024tulu}  aim to maximize a rule-based reward $A_t = \frac{r\left( \boldsymbol{y} \right) - \operatorname{mean}\left( r\left( \boldsymbol{y}^{1:G} \right) \right)}{\operatorname{std}\left( r\left( \boldsymbol{y}^{1:G} \right) \right)}$ \cite{williams1992simple} :
\begin{equation}
\max_{\theta} J(\theta) := \mathbb{E}_{\boldsymbol{x} \sim \mathcal{D}, \boldsymbol{y} \sim \pi_\theta(\boldsymbol{x})} \left[ A(\boldsymbol{y}) \right],
\end{equation}
where $\boldsymbol{x} \sim \mathcal{D}$ is the input prompt, and $\boldsymbol{y} = \{y_1, \dots, y_T\}$ is the generated sequence of length $T$.  Plugging in the partial derivatives into the policy gradient framework, we obtain:
\begin{equation} 
\label{eq:policy_gradient}
\begin{split}
&\frac{\partial J}{\partial \phi_\theta(s,       a_i)}=\mathbb{E}_{\pi_\theta} \frac{\partial \log \pi_\theta(a_i \mid s)}{\partial \phi_\theta(s, a_i)} A(s, a) \\
&= \pi_\theta(a_i \mid s) \left( A(s, a_i) - \mathbb{E}_{\pi_\theta} \left[ A(s, a) \right] \right),
\end{split}
\end{equation}
where $\mathbb{E}_{\pi_\theta} \left[ A(s, a) \right] = \sum_{a} \pi_\theta(a \mid s) A(s, a)$.
\section{Methodology}\label{sec:method}
\subsection{Framework of \method}
In Fig.\ref{fig:Overview_of_tepo_Framework}, \method leverages sequence-level likelihood and a token-level selective KL-Divergence mask to bridge group-level rewards with individual tokens. Our method's loss function is formalized as Equation \ref{eq:loss_function_TEPO}, where the blue component denotes token-level aggregation, the green component represents sequence-level likelihood, and the red component ($M_{i,t}$) corresponds to the KL-Divergence mask (applied to the $t$-th token in the $i$-th response).
\begin{figure*}[t!]
\centering
\setlength{\abovedisplayskip}{0pt}  
\setlength{\belowdisplayskip}{0pt}  
\begin{equation}
\small  
\begin{aligned}\label{eq:loss_function_TEPO}
J(\theta) = \textcolor{blue}{\frac{1}{\sum_{i=1}^G \|o_i\|} \sum_{i=1}^G \sum_{t=1}^{\|o_i\|}} \Bigg[ &\min\bigg( \textcolor{green}{w_i(\theta)}, \operatorname{clip}\big( \textcolor{green}{w_i(\theta)}, 1-\epsilon, 1+\epsilon \big) \bigg) \cdot A_{i,t} - \beta \cdot \text{KL}\big( \pi_{\theta} \parallel \pi_{\theta_{\text{old}}} \big) \cdot \textcolor{red}{M} \Bigg], \\
M &=
\begin{cases}
    1, & \text{if } A_{i,t}>0 \land \Delta\mathcal{H}_{i,t}<0, \\
    0, & \text{otherwise}.
\end{cases}
\end{aligned}
\end{equation}
\end{figure*}
\subsection{Limitations of Critic-Free GRPO}
\subsubsection{The Fragility of the KL-Divergence}
\begin{lemma}
For a softmax policy $\pi_\theta(a \mid s)$ at iteration $k+1$, the update rule is:
\[
\pi_{k+1}(a \mid s) \propto \pi_k(a \mid s) \exp\left(\beta^{-1} A(s, a)\right),
\]
where $\beta$ balances stability and performance. 
\end{lemma}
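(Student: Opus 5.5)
The plan is to read the lemma as the closed-form solution of the KL-regularized policy improvement step that GRPO-style objectives perform at each iteration. Concretely, for a fixed state $s$ I will take iteration $k+1$ to solve
\[
\pi_{k+1}(\cdot\mid s)=\arg\max_{\pi(\cdot\mid s)\in\Delta(\mathcal{A})}\ \sum_{a}\pi(a\mid s)A(s,a)-\beta\,\mathrm{KL}\big(\pi(\cdot\mid s)\,\|\,\pi_k(\cdot\mid s)\big),
\]
where $\Delta(\mathcal{A})$ is the probability simplex over the vocabulary. This matches the per-token structure of Equation~\ref{eq:loss_function_TEPO}: an advantage-weighted likelihood term minus $\beta$ times a KL toward the previous policy. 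Since the objective separates across states, it suffices to solve it for each $s$.

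\textbf{Key steps.}
\begin{enumerate}[leftmargin=1.5em, itemsep=0pt, topsep=2pt]
\item \emph{Lagrangian.} Introduce a multiplier $\lambda$ for the constraint $\sum_a\pi(a\mid s)=1$. Positivity can be ignored because the logarithm in the KL term forces interior solutions wherever $\pi_k>0$.
\item \emph{Stationarity.} Differentiating in $\pi(a\mid s)$ gives
\[
A(s,a)-\beta\big(\log\pi(a\mid s)-\log\pi_k(a\mid s)+1\big)-\lambda=0.
\]
Hence $\log\pi(a\mid s)=\log\pi_k(a\mid s)+\beta^{-1}A(s,a)+\text{const}(s)$.
\item \emph{Normalization.} Exponentiating and imposing $\sum_a\pi=1$ fixes the constant as $-\log Z_k(s)$, where $Z_k(s)=\sum_{a'}\pi_k(a'\mid s)\exp(\beta^{-1}A(s,a'))$. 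This yields the claimed proportionality.
\item \emph{Optimality.} The objective is strictly concave in $\pi(\cdot\mid s)$, because negative KL is strictly concave and the linear term does not affect concavity. The stationary point is therefore the unique global maximizer.
\end{enumerate}
A shorter alternative is to rewrite the objective as $-\beta\,\mathrm{KL}(\pi\,\|\,\pi^\star)+\beta\log Z_k(s)$, with $\pi^\star\propto\pi_k e^{A/\beta}$, and then appeal to Gibbs' inequality. I would present this version as a cross-check.

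\textbf{Consistency with the softmax parametrization.} Writing $\pi_k=\mathrm{softmax}(\phi_k)$, the result is equivalent to the logit update $\phi_{k+1}(s,a)=\phi_k(s,a)+\beta^{-1}A(s,a)$. The additive per-state constant is absorbed by the softmax, so the update stays inside the softmax family. It also links to the gradient in Equation~\ref{eq:policy_gradient}: to first order in $\beta^{-1}$, the change in $\pi(a_i\mid s)$ is $\beta^{-1}\pi(a_i\mid s)\big(A(s,a_i)-\mathbb{E}_{\pi}[A]\big)$. This reading also makes the role of $\beta$ explicit: a large $\beta$ gives small, stable steps, while a small $\beta$ gives aggressive ones.

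\textbf{Main obstacle.} The hard part is interpretive rather than computational. The lemma states the update rule without specifying the per-iteration problem, so I must justify that the clipped surrogate with a KL penalty is idealized as the exact KL-regularized maximization above. This requires dropping the clipping term, treating $A$ as fixed within the iteration, and assuming a fully expressive tabular softmax, so that the optimum over parameters equals the optimum over distributions. I will state these as explicit assumptions. With them in place, the derivation itself is routine.
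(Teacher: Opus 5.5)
Your proposal is correct and takes the same route as the paper: pose the per-state KL-regularized problem $\max_{\pi}\,\mathbb{E}_{a\sim\pi}[A(s,a)]-\beta\,\mathrm{KL}(\pi\|\pi_k)$ under the normalization constraint, and solve the Lagrangian to get $\pi_{k+1}=\pi_k e^{A/\beta}/Z_k(s)$. You go further than the paper by writing out the stationarity computation and giving the strict-concavity argument for global optimality, both of which the paper leaves implicit; your Gibbs-inequality cross-check is essentially the target-policy argument in the paper's Appendix A.
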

\begin{proof}
At iteration $k$, we obtain the next policy $\pi_{k+1}$, subject to $\sum_{a} \pi_{k+1} (a \mid s) = 1$ :
\[
\max_p \  \mathbb{E}_{a \sim p} \left[ A(s, a) \right] - \beta \cdot \text{KL} \left( \pi_{k+1} \| \pi_k(\cdot \mid s) \right)
\]
Formulating the Lagrangian optimization problem yields an iterative policy update rule:
\begin{equation} \small \label{eq:policy_update}
\pi_{k+1}(a \mid s) = \frac{\pi_k(a \mid s) \exp\left(\beta^{-1} A(s, a)\right)}
{\mathbb{E}_{a' \sim \pi_k(\cdot \mid s)} \left[ \exp\left(\beta^{-1} A(s, a')\right) \right]}.
\end{equation}
Equivalently, the update can be written as:
\[
\pi_{k+1}(a \mid s) \propto \pi_k(a \mid s) \exp\left(\beta^{-1} A(s, a)\right).
\]
\textbf{Remark:}  KL-Divergence constraint preserves stability while hurts performance. Supporting evidence is shown in Panel A of Table \ref{tab:ablation_combined}.
\end{proof}
\subsubsection{Inner Product Between Entropy Gradient and Policy Gradient}
\label{sec:Misalignment Between Entropy Gradient and Policy Gradient}
\begin{lemma}
For a softmax policy $\pi_\theta(a \mid s) \propto \exp(\phi_\theta(s,a))$, the alignment between entropy gradient and policy gradient exhibits as:
\begin{itemize}[leftmargin=0.8em, itemsep=1pt, parsep=0pt, topsep=0pt, labelsep=0.3em]
    \item For suboptimal actions ($A(s,a) < 0$): $\langle \nabla_{\phi_\theta} \mathcal{H}, \nabla_{\phi_\theta} J \rangle > 0$
    \item For optimal actions ($A(s,a) > 0$): $\langle \nabla_{\phi_\theta} \mathcal{H}, \nabla_{\phi_\theta} J \rangle < 0$
\end{itemize}
\end{lemma}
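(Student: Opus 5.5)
I read the lemma \emph{per action}: for a fixed state $s$ and a fixed action $a=a_i$ with advantage $A(s,a_i)$, the quantity $\langle \nabla_{\phi_\theta}\mathcal{H}, \nabla_{\phi_\theta}J\rangle$ is the contribution of the coordinate $\phi_\theta(s,a_i)$ to the inner product. So I will prove a sign statement about the product $\frac{\partial \mathcal{H}}{\partial \phi_\theta(s,a_i)}\cdot\frac{\partial J}{\partial \phi_\theta(s,a_i)}$, rather than one about the full sum over the vocabulary. The full sum is a covariance-type aggregate and cannot by itself give a sign split by the sign of $A$.

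\textbf{Step 1 (gradients).} For the policy-gradient factor I take Eq.~\eqref{eq:policy_gradient} directly: $\partial J/\partial\phi_\theta(s,a_i)=\pi_\theta(a_i\mid s)\big(A(s,a_i)-\mathbb{E}_{\pi_\theta}[A(s,a)]\big)$. For the entropy factor I redo the chain rule using the softmax partials listed before Eq.~\eqref{eq:entropy_gradient}. This gives $\partial\mathcal{H}/\partial\phi_\theta(s,a_i)=-\pi_\theta(a_i\mid s)\big(\log\pi_\theta(a_i\mid s)+\mathcal{H}\big)$. I will state this sign explicitly, since Eq.~\eqref{eq:entropy_gradient} is printed without the minus sign, and the direction of the lemma depends on it.

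\textbf{Step 2 (product).} Multiplying the two factors gives
\[
\frac{\partial \mathcal{H}}{\partial \phi_\theta(s,a_i)}\,\frac{\partial J}{\partial \phi_\theta(s,a_i)}
= -\,\pi_\theta(a_i\mid s)^2\,\big(\log\pi_\theta(a_i\mid s)+\mathcal{H}\big)\,\big(A(s,a_i)-\mathbb{E}_{\pi_\theta}[A]\big).
\]
Next I use the group normalization of $A_t$ in the Preliminaries. Because the advantage is mean-centred over the group, I take $\mathbb{E}_{\pi_\theta}[A]=0$, so the last factor has the sign of $A(s,a_i)$. The product then has sign $-\operatorname{sign}(A)\cdot\operatorname{sign}(\log\pi_\theta(a_i\mid s)+\mathcal{H})$.

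\textbf{Step 3 (main obstacle).} The remaining task is to fix the sign of $\log\pi_\theta(a_i\mid s)+\mathcal{H}$. It is positive exactly when $\pi_\theta(a_i\mid s)>e^{-\mathcal{H}}$, that is, when the token is more probable than the "typical" token of the distribution. This does not hold for every action, so the lemma needs a hypothesis here. I will assume that the sampled tokens under consideration are above-typical: they are drawn from $\pi_\theta$ in the low-entropy regime of CoT decoding, where the sampled token usually carries most of the mass. Note that $\mathbb{E}_{a\sim\pi_\theta}[\log\pi_\theta(a\mid s)+\mathcal{H}]=0$, so sampled tokens sit around this threshold on average, and the confident ones satisfy the inequality.

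Under this assumption the product equals $-c\,A(s,a_i)$ with $c>0$. It is therefore positive for $A<0$ (suboptimal actions) and negative for $A>0$ (optimal actions), which are exactly the two bullets. For the unlikely tokens with $\pi_\theta(a_i\mid s)<e^{-\mathcal{H}}$ both signs flip, and I will record this as a remark. The point I expect to need the most care is justifying the centering $\mathbb{E}_{\pi_\theta}[A]=0$ together with the typicality condition, since everything else is direct substitution.
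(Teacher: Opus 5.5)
Your argument is correct for what it proves: the per-action sign statement under the typicality hypothesis $\pi_\theta(a_i\mid s)>e^{-\mathcal{H}}$. It takes a genuinely different route from the paper's, and the two routes pass through incompatible intermediate claims.

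\textbf{The paper's route.} Both arguments work per action and drop $\mathbb{E}_{\pi_\theta}[A]$. The paper, however, keeps the plus sign printed in Eq.~\eqref{eq:entropy_gradient}. Its product is therefore $\pi_\theta(a_i\mid s)^2(\log\pi_\theta(a_i\mid s)+\mathcal{H})A(s,a_i)$, and it needs $\log\pi_\theta+\mathcal{H}<0$ in \emph{both} cases. For $A<0$ it restricts to rare tokens ($\pi_\theta\to0^+$). For $A>0$ it restricts to dominant tokens ($\pi_\theta\to1^-$) and asserts $\log\pi_\theta+\mathcal{H}=(1-\pi_\theta)\log\pi_\theta<0$.

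\textbf{Where it breaks.} That identity omits the entropy carried by the other tokens, $-\sum_{a'\neq a}\pi_\theta(a'\mid s)\log\pi_\theta(a'\mid s)\ge-(1-\pi_\theta)\log(1-\pi_\theta)$. Including it gives $\log\pi_\theta+\mathcal{H}\ge(1-\pi_\theta)\log\frac{\pi_\theta}{1-\pi_\theta}>0$ for $\tfrac12<\pi_\theta<1$. So the paper's Case 2 regime in fact satisfies your hypothesis, and its conclusion there survives only because this error cancels the sign error. Its Case 1 regime (rare token, $A<0$) is exactly where your closing remark says the signs flip. With the correct gradient such a token contributes $-\pi_\theta^2(\log\pi_\theta+\mathcal{H})A<0$. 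This matches the sign of the covariance formula Eq.~\eqref{eq:entropy_covariance} in the paper's own appendix: pushing an already rare token further down lowers entropy.

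\textbf{What each route buys.} The paper's route offers an apparently hypothesis-free split by the sign of $A$ alone, which is what its subsequent theorem on $\Delta\mathcal{H}$ uses, but it does not hold up. Yours gives a true statement at the price of an explicit hypothesis. That hypothesis is unavoidable, since both bullets fail for below-typical tokens.

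\textbf{Two small repairs.} First, state typicality as a hypothesis of the lemma rather than motivating it by the low-entropy CoT heuristic. Second, do not derive $\mathbb{E}_{\pi_\theta}[A(s,\cdot)]=0$ from group normalization. That normalization centres the empirical mean over the $G$ sampled responses, not the per-state expectation over the vocabulary. The centering holds by definition if $A$ is the true advantage $Q-V$; otherwise it must be assumed, as the paper tacitly does.
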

\begin{proof}
We analyze the inner product between the entropy gradient and the policy gradient:
\[
\langle \nabla_{\phi_\theta} \mathcal{H}, \nabla_{\phi_\theta} J \rangle = \sum_{a} \frac{\partial \mathcal{H}}{\partial \phi_\theta(s, a)} \cdot \frac{\partial J}{\partial \phi_\theta(s, a)}.
\]
Substituting the entropy gradient from Eq.~\ref{eq:entropy_gradient} and the policy gradient from Eq.~\ref{eq:policy_gradient} yields:
\begin{equation*}
\small
\begin{split}
&\langle \nabla_{\phi_\theta} \mathcal{H}, \nabla_{\phi_\theta} J \rangle = \sum_{a} \frac{\partial \mathcal{H}}{\partial \phi_\theta(s, a)} \cdot \frac{\partial J}{\partial \phi_\theta(s, a)}. \\
&= \sum_{a_i} \pi_\theta(a_i \vert s)^2 \left( \log \pi_\theta(a_i \vert s) + \mathcal{H}(\pi_\theta(\cdot \vert s)) \right) A(s, a_i).
\end{split}
\end{equation*}

\textbf{Case 1: Suboptimal Actions } 
\begin{itemize}[leftmargin=0.8em, itemsep=1pt, parsep=0pt, topsep=0pt, labelsep=0.3em]
    \item For actions with small probability ($\pi_\theta(a \mid s) \to 0^{+}$), we have $\log \pi_\theta(a \mid s) \to -\infty$ and $\mathcal{H}(\pi_\theta(\cdot \mid s)) \to 0^{+}$, making $\log \pi_\theta(a \mid s) + \mathcal{H}(\pi_\theta(\cdot \mid s)) < 0$.
    \item With $\pi_\theta(a \mid s)^2 > 0$, $A(s,a) < 0$, therefore: $\langle \nabla_{\phi_\theta} \mathcal{H}, \nabla_{\phi_\theta} J \rangle > 0$.
\end{itemize}

\textbf{Case 2: Optimal Actions} 
\begin{itemize}[leftmargin=0.8em, itemsep=1pt, parsep=1pt, topsep=0pt, labelsep=0.3em]
    \item For actions with large probability ($\pi_\theta(a \mid s) \to 1^{-}$), we have $\log \pi_\theta(a \mid s) \to 0^-$, making $\log \pi_\theta(a \mid s) + \mathcal{H}(\pi_\theta(\cdot \mid s))=(1-\pi_\theta(a \mid s))\log \pi_\theta(a \mid s)<0$
    \item With $\pi_\theta(a \mid s)^2 > 0$, $A(s,a) > 0$, therefore: $\langle \nabla_{\phi_\theta} \mathcal{H}, \nabla_{\phi_\theta} J \rangle < 0$.
\end{itemize}
\end{proof}
\subsubsection{Undifferentiated Entropy Regularization unfits}
\begin{theorem}
$\Delta \mathcal{H}$ (entropy change) characterizes the exploration tendency as follows:
\begin{itemize}[leftmargin=0.8em, itemsep=1pt, parsep=1pt, topsep=0pt, labelsep=0.3em]
\item For $A(s, a) < 0$: $(\langle \nabla \mathcal{H}, \nabla J \rangle > 0) \rightarrow \Delta \mathcal{H} > 0$, \textbf{promoting} exploration.
\item For $A(s, a) > 0$: $(\langle \nabla \mathcal{H}, \nabla J \rangle < 0) \rightarrow \Delta \mathcal{H} < 0$, \textbf{suppressing} exploration.
\end{itemize}
\end{theorem}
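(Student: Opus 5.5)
The argument is a first-order Taylor expansion of the entropy along one policy-gradient step, after which the sign is read off from the preceding lemma on the inner product between $\nabla_{\phi_\theta}\mathcal{H}$ and $\nabla_{\phi_\theta}J$.

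We model one update in logit space as a gradient-ascent step on $J$,
\[
\phi_\theta' = \phi_\theta + \eta\, \nabla_{\phi_\theta} J,
\]
with step size $\eta>0$. Here $\nabla_{\phi_\theta} J$ is given componentwise by Eq.~\ref{eq:policy_gradient}. Since $\mathcal{H}(\pi_\theta(\cdot\mid s))$ is a smooth function of the logits through the softmax, Taylor's theorem gives
\[
\Delta\mathcal{H} = \mathcal{H}(\phi_\theta') - \mathcal{H}(\phi_\theta) = \eta\,\langle \nabla_{\phi_\theta}\mathcal{H}, \nabla_{\phi_\theta} J\rangle + O(\eta^2).
\]
The first key step is to state this expansion and to bound the remainder. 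The Hessian of the entropy with respect to the logits is bounded on any compact set of logits, so the remainder is at most $C\eta^2\|\nabla_{\phi_\theta} J\|^2$ for some constant $C$.

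The second step substitutes the inner product already computed in the proof of the previous lemma:
\[
\langle \nabla_{\phi_\theta}\mathcal{H}, \nabla_{\phi_\theta} J\rangle = \sum_{a_i}\pi_\theta(a_i\mid s)^2\bigl(\log\pi_\theta(a_i\mid s)+\mathcal{H}\bigr)A(s,a_i).
\]
We then split into the two cases of that lemma. If the update is driven by a sampled action with $A(s,a)<0$, the inner product is positive, so $\Delta\mathcal{H}>0$ once $\eta$ is small enough that the first-order term dominates. A rising entropy flattens the distribution, which is what we mean by promoting exploration. If instead $A(s,a)>0$, the inner product is negative, so $\Delta\mathcal{H}<0$ and exploration is suppressed. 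Each implication in the statement therefore follows from the sign of the leading term.

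The main obstacle is the remainder term. The sign conclusion holds only when
\[
\eta\,|\langle \nabla\mathcal{H},\nabla J\rangle| > C\eta^2\|\nabla J\|^2,
\]
that is, when $\eta < |\langle\nabla\mathcal{H},\nabla J\rangle|/(C\|\nabla J\|^2)$. This threshold is strictly positive whenever the inner product is nonzero, which the preceding lemma provides in the regimes it considers. I would therefore state the theorem as holding for sufficiently small step sizes, in line with the local, first-order character of the earlier lemmas. I would also note that the identification of exploration with an increase in entropy is a definition rather than a derived fact.
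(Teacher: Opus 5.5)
Your proposal follows the paper's proof. A first-order Taylor expansion of $\mathcal{H}$ along the logit-space ascent step $\Delta\phi_\theta=\alpha\nabla_{\phi_\theta}J$ gives $\Delta\mathcal{H}\approx\alpha\langle\nabla_{\phi_\theta}\mathcal{H},\nabla_{\phi_\theta}J\rangle$, the sign is read off from the preceding inner-product lemma, and your remainder bound with a small-step-size condition makes rigorous the ``$\approx$'' that the paper leaves informal.

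Be aware, though, that, as in the paper, only the implication from the sign of the inner product to the sign of $\Delta\mathcal{H}$ is actually established; the case-wise signs come from that lemma, whose entropy gradient has lost a minus sign (correctly $\partial\mathcal{H}/\partial\phi_\theta(s,a)=-\pi_\theta(a\mid s)\bigl(\log\pi_\theta(a\mid s)+\mathcal{H}\bigr)$), so the inner-product formula you copy has the wrong overall sign and, for instance, lowering the logit of a rare action with $A(s,a)<0$ decreases rather than increases $\mathcal{H}$.
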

\begin{proof}
Policy gradient update with $\alpha > 0$ is:
\[
\phi_\theta(s, a) \leftarrow \phi_\theta(s, a) + \alpha \cdot \frac{\partial J}{\partial \phi_\theta(s, a)}.
\]
$\Delta \mathcal{H}$ is approximated via a first-order Taylor expansion as:
\[
\Delta \mathcal{H} \approx\frac{\partial \mathcal{H}}{\partial \phi_\theta(s, a)} \cdot \Delta \phi_\theta(s, a) = \alpha \cdot \langle \nabla_{\phi_\theta} \mathcal{H}, \nabla_{\phi_\theta} J \rangle,
\]
where $\Delta \phi_\theta(s, a) = \alpha \cdot \frac{\partial J}{\partial \phi_\theta(s, a)}$. Therefore:
\begin{itemize}[leftmargin=0.8em, itemsep=1pt, parsep=1pt, topsep=0pt, labelsep=0.3em]
    \item When $A(s,a) < 0$ , $(\langle \nabla \mathcal{H}, \nabla J \rangle > 0)\rightarrow\Delta \mathcal{H} > 0$, \textbf{promoting} exploration.
    \item When $A(s,a) > 0$ , $(\langle \nabla \mathcal{H}, \nabla J \rangle < 0)\rightarrow\Delta \mathcal{H} < 0$, \textbf{suppressing} exploration.
\end{itemize}
\textbf{Remark:} GRPO grants a mechanism that suppresses high-advantage exploration and redirects it to low-advantage regions in critic-free GRPO, which counteracts with undifferentiated entropy regularization. Evidence that undifferentiated entropy regularization is unsuitable is provided in Panel B of Table \ref{tab:ablation_combined}.
\end{proof}
\subsubsection{Why Token-Level Importance Sampling unfits} \label{Why Token-Level Importance Sampling unfits}
In GRPO \cite{shao2024deepseekmath}, we decompose the policy entropy change $\mathcal{H}(\pi_{k+1}) - \mathcal{H}(\pi_k)$, under the state distributions $d^{\pi_k}$ and $d^{\pi_{k+1}}$ :
\begin{equation*}
\label{eq:entropy_decomposition}
\begin{split}
&\underbrace{\mathbb{E}_{s \sim d^{\pi_{k+1}}} \mathcal{H}(\pi_{k+1}(\cdot \mid s)) - \mathbb{E}_{s \sim d^{\pi_k}} \mathcal{H}(\pi_{k+1}(\cdot \mid s))}_{\text{\textcolor{purple}{State Distribution Shift: $\Delta \mathcal{H}$ with Importance Sampling}}} \\
&+ \underbrace{\mathbb{E}_{s \sim d^{\pi_k}} \mathcal{H}(\pi_{k+1}(\cdot \mid s)) - \mathbb{E}_{s \sim d^{\pi_k}} \mathcal{H}(\pi_k(\cdot \mid s))}_{\text{$\Delta \mathcal{H}$ During Sampling}}
\end{split}
\end{equation*}
The discrepancy between its sampling strategy and model update strategy gives rise to \textcolor{purple}{State Distribution Shift: $\Delta \mathcal{H}$ with Importance Sampling}. Consequently, critic-free GRPO struggles to perform token-level importance sampling (IS) $\frac{\pi_\theta(y_{i,t} \mid x, y_{i,<t})}{\pi_{\theta_{\text{old}}}(y_{i,t} \mid x, y_{i,<t})}$, as this token-level metric fails to capture the global state distribution shift ($d^{\pi_{k+1}} \neq d^{\pi_k}$). We further identify that critic-free GRPO inherently suffers from sparse token-level rewards, a finding validated by ablation studies in Panel C of Table \ref{tab:ablation_combined} and formal mathematical derivations in Section \ref{sec:Why Sentence Likelihood Derivation Fits GRPO}.
\begin{figure*}[tb]
\centering
\setlength{\parindent}{0pt}
\begin{subfigure}{0.45\textwidth}
\centering
\includegraphics[width=\linewidth]{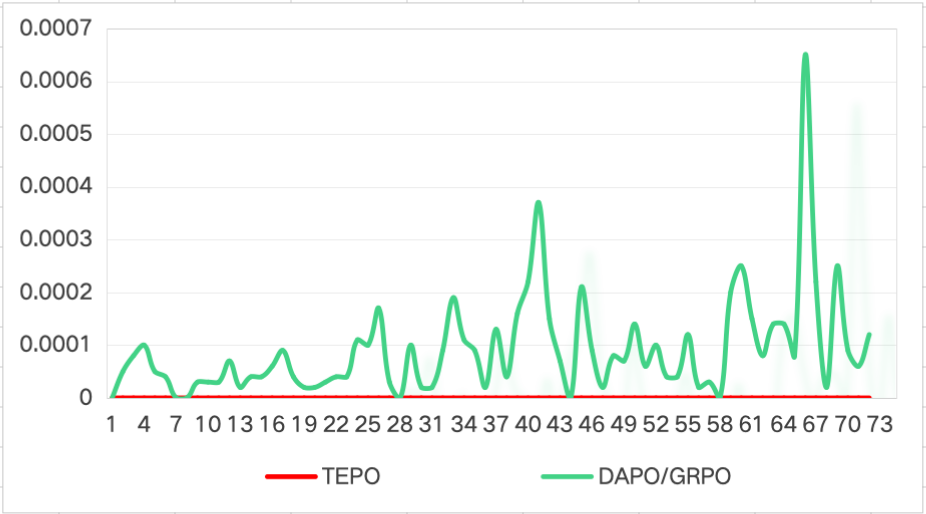}
\subcaption{Clip Ratio Over Training Steps}
\label{fig:Clip_Ratio_Comparison}
\end{subfigure}
\hspace{0.05\textwidth}
\begin{subfigure}{0.45\textwidth}
\centering
\includegraphics[width=\linewidth]{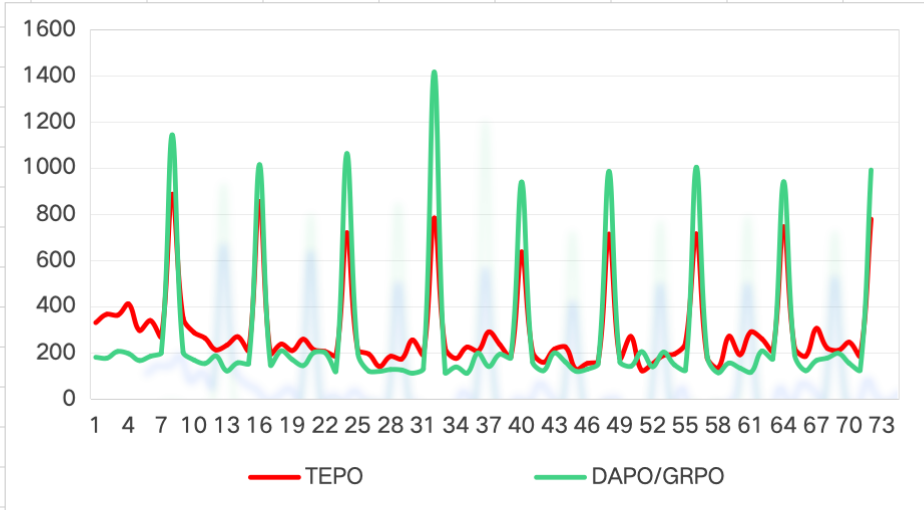}
\subcaption{Reasoning Time Evolution Over Training Steps}
\label{fig:Time_per_Step}
\end{subfigure}
\caption{Lower Gradient Bias and Faster Reasoning Efficiency with Markov Likelihood: The left panel shows that our method with a lower clip ratio effectively mitigates gradient bias, while the right panel indicates that it reduces the generation of redundant reasoning steps. Specifically, the average reasoning time of TEPO is 338 \textbf{seconds per step}, which is lower than that of DAPO/GRPO (357).}
\label{fig:dynamic_per_Step}
\end{figure*}
\begin{figure*}[tb]
\centering
\setlength{\parindent}{0pt}
\begin{subfigure}{0.45\textwidth}
\centering
\includegraphics[width=\linewidth]{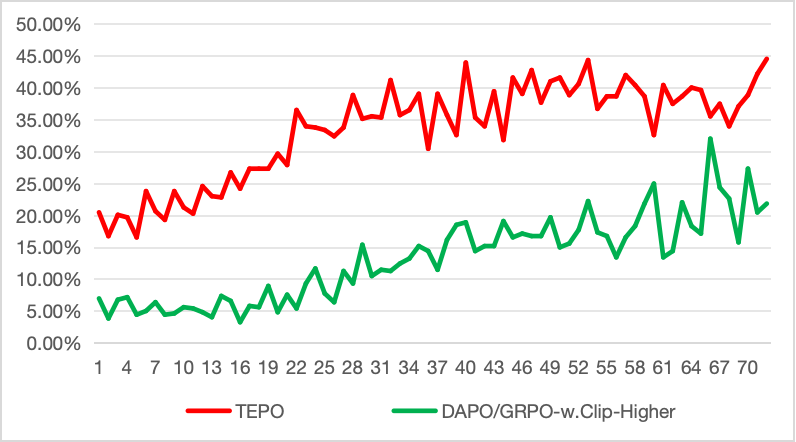}
\subcaption{Reward Progression Over Steps}
\label{fig:tepo_grpo_reward}
\end{subfigure}
\hspace{0.05\textwidth}
\begin{subfigure}{0.45\textwidth}
\centering
\includegraphics[width=\linewidth]{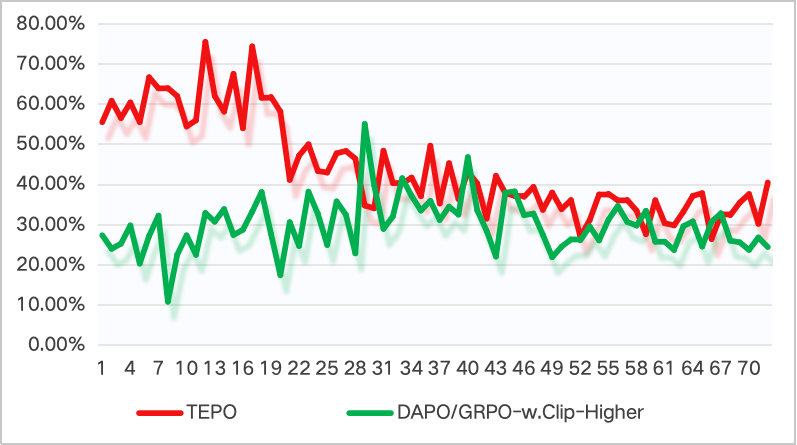}
\subcaption{Gradient Norm Over Steps}
\label{fig:tepo_grpo_grad}
\end{subfigure}
\caption{Markov Likelihood enhanced performance (We transform the raw data into percentage-based values.):  The left panel shows that our method achieves steadier and higher rewards across training steps, demonstrating more efficient learning dynamics. The right panel indicates that our method exhibits consistently higher gradient norms, reflecting more active and effective parameter reasoning.}
\label{fig:tepo_vs_grpo}
\end{figure*}
\subsection{Components in \method}
This section explains the rationale behind each component of our method.
\subsubsection{Sequence-level Likelihood} \label{sec:Sequence-level Likelihood}
Leveraging the Markov factorization (where each token’s probability depends on prior tokens \cite{chung1967markov}), we define the sequence-level weight $w_i(\theta)$ as the geometric mean of token-level importance ratios $\left( \frac{\pi_\theta(y_i \mid x)}{\pi_{\theta_{\text{old}}}(y_i \mid x)} \right)^{\frac{1}{|y_i|}}$:
\begin{equation*}
\begin{split}
w_i(\theta)= \exp\left( \frac{1}{|y_i|} \sum_{t=1}^{|y_i|}
\log \frac{\pi_\theta(y_{i,t} \mid x, y_{i,<t})}{\pi_{\theta_{\text{old}}}(y_{i,t} \mid x, y_{i,<t})} \right),
\end{split}
\end{equation*}
which represents a geometric mean. This approach uses sequence-level likelihood to connect group-level rewards and token-level aggregation, balancing exploration and exploitation with these key advantages:
\begin{itemize}[leftmargin=0.8em, itemsep=0pt, parsep=0pt, topsep=0pt, labelsep=0.3em]
    \item Reduces gradient bias (Figure \ref{fig:Clip_Ratio_Comparison});
    \item Lowers reasoning time (338  vs. 357 \textbf{seconds per step} for DAPO/GRPO) (Figure \ref{fig:Time_per_Step});
    \item Maintains training stability and exploration capability (Figure \ref{fig:tepo_vs_grpo}).
\end{itemize}
\subsubsection{Analysis of Token-Level KL Regularization}
As shown in Section~\ref{sec:Misalignment Between Entropy Gradient and Policy Gradient}, for the $t$-th token in response $i$, the misalignment condition 
\[
(A_{i,t}>0 \land \Delta\mathcal{H}_{i,t} < 0) \lor (A_{i,t}<0 \land \Delta\mathcal{H}_{i,t} > 0)
\]
holds. This discrepancy can lead to excessive policy updates; we therefore hypothesize that token-level KL-Divergence regularization can effectively mitigate this issue.

To verify this hypothesis, we conduct ablation experiments (Panel~E of Table~\ref{tab:ablation_combined}) comparing three variants: \method without KL regularization, \method with KL applied to \((A_{i,t}<0 \land \Delta\mathcal{H}_{i,t} > 0)\),\((A_{i,t}>0 \land \Delta\mathcal{H}_{i,t} < 0) \lor (A_{i,t}<0 \land \Delta\mathcal{H}_{i,t} > 0)\) and \method with KL applied only to tokens satisfying \(
(A_{i,t}>0 \land \Delta\mathcal{H}_{i,t} < 0)
\). The results confirm the effectiveness of \method with KL applied only to tokens satisfying \(
(A_{i,t}>0 \land \Delta\mathcal{H}_{i,t} < 0)
\). The performance drop observed under the condition $A<0 \land \Delta\mathcal{H} > 0$ suggests that misleading responses in this regime fail to provide reliable gradient directions for policy optimization.
\sisetup{
  table-number-alignment = center,
  table-format = 2.2,        
  round-mode = places,    
  round-precision = 2,       
  detect-weight = true
}
\begin{table*}[t]
\centering
\caption{Performance Comparison of 7B and 14B Models on Mathematical Reasoning Benchmarks. Notations: 1-(w/o. GRPO/DAPO) indicates models not applying GRPO/DAPO, while (w. GRPO/DAPO) indicates models applying GRPO/DAPO; 2-higher values indicate better performance; 3-best results are marked in \textbf{bold}; 4-$\Delta$ denotes the performance difference between models \textit{without} and \textit{with} our method; .}
\setlength{\tabcolsep}{4pt}
\renewcommand{\arraystretch}{1.15}
\resizebox{\textwidth}{!}{%
\begin{tabular}{l S S S S S S S S}
\toprule
\textbf{Method} &
\textbf{AIME24} & \textbf{AIME25} & \textbf{AMC} & \textbf{MATH-500} &
\textbf{OMNI-MATH} & \textbf{OlympiadBench} & \textbf{Minerva} & \textbf{Avg.} \\
\midrule
\textbf{Qwen2.5-7B}    & 0.94 & 0.94 & 14.34 & 43.20 & 13.73 & 16.74 & 21.69 & 13.30 \\
w.\ GRPO/DAPO & 11.25 & 5.00 & 42.80 & 76.60 & 25.00 & 37.92 & 33.08 & 30.85\\ 
w.\ ClLIP-Cov           & 10.83 & 7.40 & 42.84 & 75.60 & 26.11 & 39.40 & \bfseries 37.86 & 31.64 \\
w.\ KL-Cov           & 12.29 & 8.23 & 42.77 & 75.60 & 25.64 & 39.11 & 34.92 & 31.60 \\
w.\ Entropy-based Term & 11.35 & 6.15 & 43.18 & 74.80 & 26.14 & 40.14 & 36.02 & 31.62 \\
w.\ GPG & \bfseries 13.54 & 6.04 & 43.29 & 75.00 & 26.28 & \bfseries 40.44 & 34.55 & 31.91 \\
w.\ GSPO & 11.77 & 6.04 & 42.77 & 75.80 & 25.72 & 38.37 & 35.66 & 31.33 \\
\rowcolor{red!6}\textbf{\method}      & 12.60 & \bfseries 8.75 & \bfseries 43.48 & \bfseries 77.40 & \bfseries 27.17 & \bfseries 40.44 & 34.92 & \bfseries 32.5900 \\
\textbf{\method} ($\Delta$ vs GRPO/DAPO, \%) & 1.3500 & 3.7500 & 5.6800 & 0.800 & 2.1700 & 2.52 & 1.84 & 1.7400 \\

\midrule
\textbf{Qwen3-14B}& 19.16 & 16.77 & 51.01 & 82.80 & 30.27 & 43.40 & 45.95 & 38.34 \\
w.\ GRPO/DAPO & 22.08 & 19.06 & 55.94 & 85.60 & 33.01 & 44.00 & 47.79 & 41.51 \\
w.\ CLIP-Cov           & 22.70 & 20.10 & 54.96 & 86.00 & 32.94 & 43.55 & 45.58 & 41.29 \\
w.\ KL-Cov           & 23.85 & 19.06 & 56.47 & \bfseries 86.40 & 32.87 & 43.85 & 47.79 & 41.85 \\
w.\ Entropy-based Term & 24.27 & 18.54 & 55.87 & 85.20 & 33.04 & 42.37 & 48.52 & 41.56 \\
w.\ GPG & 23.85 & 19.89 & 57.34 & 84.40 & 32.22 & 43.55 & 47.05 & 42.16 \\
w.\ GSPO & 23.85 & 20.41 & 56.58 & 86.00 & 33.54 & 44.44 & 48.52 & 42.28 \\
\rowcolor{red!6}\textbf{\method}     & \bfseries 24.37 & \bfseries 23.75 & \bfseries 58.96 & \bfseries 86.40 & \bfseries 35.11 & \bfseries 46.37 & \bfseries 49.26 & \bfseries 44.02 \\
\textbf{\method} ($\Delta$ vs GRPO/DAPO, \%) & 5.2100 & 4.6900 & 3.0200 & 0.80 & 2.1000 & 2.3700 & 1.47 & 2.5100 \\
\bottomrule
\end{tabular}%
}
\label{tab:math_benchmark_performance_qwen}
\end{table*}

\sisetup{
  table-number-alignment = center,  
  table-format = 2.2,            
  round-mode = places,          
  round-precision = 2,            
  detect-weight = true,     
  detect-shape = true
}

\begin{table*}[t]
\centering
\caption{Performance Comparison of Various Models on Mathematical Reasoning Benchmarks}
\setlength{\tabcolsep}{4pt}
\renewcommand{\arraystretch}{1.15}

\resizebox{\textwidth}{!}{
  \begin{tabular}{l S S S S S S S S}
    \toprule
    
    \textbf{} & 
    \textbf{AIME24} & \textbf{AIME25} & \textbf{AMC} & \textbf{MATH-500} &
    \textbf{OMNI-MATH} & \textbf{OlympiadBench} & \textbf{Minerva} & \textbf{Avg.} \\ 
    \midrule
    \textbf{DeepSeek-R1-Distill-Llama-8B} & 12.70 & 13.33 & 46.49 & 70.40 & 25.50 & 36.88 & 23.52 & 32.46 \\
    w. GRPO/DAPO & \textbf{27.50} & 18.43 & \bfseries 61.89 & 78.40 & 32.97 & 41.62 & 26.10 & 42.23 \\
    \rowcolor{red!6}\textbf{\method} & 25.72 & \bfseries 18.85 & 60.84 & \bfseries 79.60 & \bfseries 33.11 & \bfseries 43.55 & \bfseries 29.77 & \bfseries 42.76 \\
    
    \midrule
    \textbf{Mistral-7B-Instruct-v0.2} &{\text{-}}    & {\text{-}}     &  2.1830 & 9.800 & 3.9840 & 1.1850 & 5.8820 & 2.7700 \\
    w.GRPO/DAPO &{\text{-}}    &{\text{-}}    & 3.1250 & 10.8000 & 4.2680 & \bfseries 2.3700 & \bfseries 8.0880 & 3.3690 \\
    \rowcolor{red!6}\textbf{\method} &{\text{-}}    &{\text{-}}    & \bfseries 3.5760 & \bfseries 12.8000 & \bfseries 4.5890 & 1.9250 & \bfseries 8.0880 & \bfseries 3.6520 \\
    \midrule

    \textbf{DeepSeek-R1-Distill-Qwen-7B} & 29.8900 & 20.9300 & 59.4800 & 83.4000 & 33.8600 & 42.6600 & 34.1900 & 43.2300 \\
    w. GRPO/DAPO & 32.08 & 23.3300 & 66.1000 & \bfseries 87.8000 & 38.4200 &  47.7000 & 40.4400 & 45.9900 \\
    \rowcolor{red!6}\textbf{\method} & \bfseries 32.7000 & \bfseries 24.5800 & \bfseries 66.4100 & \bfseries 87.8000 & \bfseries 39.55000 & \bfseries 49.9200 & \bfseries 40.8000 & \bfseries 48.6000 \\
    \bottomrule
  \end{tabular}%
}
\label{tab:math_benchmark_performance_other}
\end{table*}

\sisetup{
  table-number-alignment = center,
  table-format = 2.2,        
  round-mode = places,    
  round-precision = 2,       
  detect-weight = true
}
\begin{table*}[t]
\centering
\caption{Ablation Studies of Key Components. All panels are compared with\method-w/o. KL:
  Panel A demonstrates the fragility of undifferentiated KL-Divergence and its performance degradation effect; 
  Panel B shows that undifferentiated entropy regularization yields marginal improvements to \method-w/o-KL; 
  Panel C validates \method's effectiveness across different importance sampling strategies and corroborates that GRPO exhibits sparse token-level rewards; 
  Panel D confirms the superiority of token-mean aggregation over alternative strategies; 
  Panel E verifies that \method achieves optimal control of token-level KL-Divergence.
}
\setlength{\tabcolsep}{4pt}
\renewcommand{\arraystretch}{1.15}
\resizebox{\textwidth}{!}{%
\begin{tabular}{l S S S S S S S S}
\toprule
\textbf{Method} &
\textbf{AIME24} & \textbf{AIME25} & \textbf{AMC} & \textbf{MATH-500} &
\textbf{OMNI-MATH} & \textbf{OlympiadBench} & \textbf{Minerva} & \textbf{Avg.} \\
\midrule
\method-w/o. KL      &  12.7000 & 6.3540 & \bfseries 43.5600 & 75.4000 & 27.0000 & 39.5500 & \bfseries 37.5000 & 32.21 \\
\midrule
\multicolumn{9}{l}{\textit{Panel A: \method w. Undifferentiated KL-Divergence}} \\
w. $\beta=1$(model collapse in 24 steps) &  0 & 0 & 0 & 0 & 0 & 0 & 0 & 0 \\
w.\ $\beta=0.1$ (leads to Avg drop of 4.6\%.) &  5.5200 & 3.9580 & 42.8400 & 75.6000 & 22.4100 & 32.7400 & 30.8800 & 26.2500 \\

w.\ $\beta=0.01$ &  11.45  & 7.395 & 42.2400 & 76.2000 & 26.3200 & 40.5900 & 33.8200 & 31.6400 \\

w.\ $\beta=0.001$ &  12.29 & 8.958 & 42.2400 & 77.2000 & 25.8900 & 40.1100 & 36.3900 & 31.8100 \\

w.\ $\beta=0.0001$ &  11.4500 & 5.6250 & 43.1800 & 76.2000 & 26.5300 & 37.3300 & \bfseries 37.5000 & 31.6100 \\
\midrule
\multicolumn{9}{l}{\textit{Panel B: \method w.  Undifferentiated Entropy Regularization}} \\
w.\ Max-Entropy   & 13.0200 & 5.9370 & 42.8000 & 76.6000 & 26.8500 & 38.3700 & 37.1300 & 31.6500 \\

w.\ Min-Entropy   & 11.9700 & 5.6250 & 41.9400 & 72.4000 & 25.7500 & 36.2900 & 35.29 & 30.6800 \\
w.\ STEEL &12.70 & 5.63 & 41.64 & 76.00 & 26.82 & 37.03 & 36.39 & 31.30\\
w.\ ClLIP-Cov           & 10.83 & 7.40 & 42.84 & 75.60 & 26.11 & 39.40 & \bfseries 37.86 & 31.64 \\
w.\ KL-Cov           & 12.29 & 8.23 & 42.77 & 75.60 & 25.64 & 39.11 & 34.92 & 31.60 \\
w.\ Entropy-based Term & 11.35 & 6.15 & 43.18 & 74.80 & 26.14 & 40.14 & 36.02 & 31.62 \\

\midrule
\multicolumn{9}{l}{\textit{Panel C: GRPO/DAPO w.  Different Importance sampling}(IS)} \\
 w.\ GRPO/DAPO (token-level) & 11.2500 & 5.0000 & 42.8000 & 76.6000 & 25.0000 & 37.9200 & 33.0800 & 30.8500\\ 
 
w.\ GPG (remove IS) & \bfseries 13.5400 & 6.041 & 43.2900 & 75.000 & 26.28 & \bfseries 40.44 & 34.55 & 31.9100 \\

w.\ CISPO/Reinforce (token-level) & 11.0400 & 6.7700 & 42.8000 & 74.8000 & 26.9600 & 38.5100 & 32.7200 & 31.5800 \\

w.\ Sentence Prefix IS & 11.0800 & 6.6660 & 42.4400 & 73.4000 & 25.1100 & 39.1100 & 35.6600 & 30.9500 \\
\midrule
\multicolumn{9}{l}{\textit{Panel D: Sentence Likelihood w. Different Aggregation}} \\
w. sequence-mean token-mean/GSPO aggregation & 11.77 & 6.04 & 42.77 & 75.80 & 25.72 & 38.37 & 35.66 & 31.33 \\
w. sequence-mean token-sum aggregation & 12.50 & 5.52 & 43.18 & 76.00 & 26.57 & 38.96 & 36.39 & 31.80 \\
\midrule
\multicolumn{9}{l}{\textit{Panel E: \method w. Token-Level KL-Divergence}} \\
\method-w/o. KL      &  12.7000 & 6.3540 & \bfseries 43.5600 & 75.4000 & 27.0000 & 39.5500 & \bfseries 37.5000 & 32.21 \\
\method-w. KL\( (A_{i,t}<0 \land \Delta\mathcal{H}_{i,t} > 0)\)&11.35	&4.37&42.62&76.40&26.11&39.25&35.66&31.47 \\
\method-w. KL\((A_{i,t}>0 \land \Delta\mathcal{H}_{i,t} < 0) \lor (A_{i,t}<0 \land \Delta\mathcal{H}_{i,t} > 0)\)   &  13.43 & 5.729 & 44.0500 & 75.6000 & 26.2500 & 38.8100 & 35.6600 & 32.03 \\

\midrule
\rowcolor{red!6}\textbf{\method}      & 12.60 & \bfseries 8.75 &  43.48 & \bfseries 77.40 & \bfseries 27.17 & \bfseries 40.44 & 34.92 & \bfseries 32.5900 \\
\bottomrule
\end{tabular}%
}
\label{tab:ablation_combined}
\end{table*}

\sisetup{
  table-number-alignment = center,
  table-format = 2.2,        
  round-mode = places,    
  round-precision = 2,       
  detect-weight = true
}
\begin{table*}[t]
\centering
\caption{Ablation Studies of Random Seeds. We report performance on seven mathematical reasoning benchmarks using two different random seeds for each method. “Mean” denotes the average performance over two seeds. \method consistently outperforms baseline methods across all settings.
}
\setlength{\tabcolsep}{4pt}
\renewcommand{\arraystretch}{1.15}
\resizebox{\textwidth}{!}{%
\begin{tabular}{l S S S S S S S S}
\toprule
\textbf{Method} &
\textbf{AIME24} & \textbf{AIME25} & \textbf{AMC} & \textbf{MATH-500} &
\textbf{OMNI-MATH} & \textbf{OlympiadBench} & \textbf{Minerva} & \textbf{Avg.} \\
\midrule
GRPO/DAPO (72 steps, seed1) & 11.56 & 6.25 & 40.47 & 72.40 & 26.00 & 37.77 & 36.76 & 30.30 \\
GRPO/DAPO (72 steps, seed2) & 11.25 & 5.00 & 42.80 & 76.60 & 25.00 & 37.92 & 33.08 & 30.85 \\
GRPO/DAPO (132 steps, seed1) & 12.08 & 7.08 & 42.65 & 76.20 & 26.07 & 39.85 & 37.86 & 31.73 \\
GRPO/DAPO (132 steps, seed2) & 12.08 & 6.98 & 41.67 & 74.20 & 27.42 & 39.85 & 36.76 & 31.69 \\
TEPO (72 steps, seed1) & 12.50 & 7.60 & 45.18 & 77.80 & 26.78 & 41.03 & 38.97 & 32.62 \\
TEPO (72 steps, seed2) & 12.60 & 8.75 & 43.48 & 77.40 & 27.17 & 40.44 & 34.92 & 32.59 \\
\midrule
\multicolumn{9}{l}{\textit{Mean over Two Seeds}} \\
GRPO/DAPO (72 steps, mean) & 11.41 & 5.63 & 41.64 & 74.50 & 25.50 & 37.85 & 34.92 & 30.58 \\
GRPO/DAPO (132 steps, mean) & 12.08 & 7.03 & 42.16 & 75.20 & 26.75 &  \bfseries39.85 & 37.31 & 31.71 \\
\rowcolor{red!6}\textbf{\method} (72 steps, mean) &  \bfseries12.55 &  \bfseries8.18 & \bfseries 44.33 &  \bfseries77.60 &  \bfseries26.98 &  \bfseries40.74 & 36.95 &  \bfseries32.61 \\
\bottomrule
\end{tabular}
}
\label{tab:ablation_seeds}
\end{table*}
\section{Experiment}
\subsection{Experimental Setup}
\paragraph{Implementation Details.}
For each rollout step, we processed 64 prompts per batch and sampled 8 responses per prompt with temperature 1.0. The policy was updated 8 times using these responses. To keep training effective, we removed prompts whose sampled responses were all correct or all wrong, following \cite{yu2025dapo}. Key hyperparameters were: learning rate $lr = 5 \times 10^{-7}$, maximum prompt length $max\_prompt\_length = 2024$, maximum response length $max\_response\_length = 8192$, and training prompt mini-batch size $train\_prompt\_mini\_bsz = 16$. 
\paragraph{Datasets.}
We trained different models on DAPO-MATH \cite{yu2025dapo} and evaluated on seven math benchmarks: MATH-500, AIME24/25 \cite{li2024numinamath}, AMC, OMNI-MATH, OlympiadBench, and Minerva \cite{lewkowycz2022solving}. For AIME and AMC we used temperature 0.6; for the others we used greedy decoding. The maximum generation length was 8192 tokens. AIME, AIME25, and AMC results were reported with 32 samples per problem (@32) following prior work \cite{guo2025deepseek,guo2025deepseekr1}.
\paragraph{Baseline Methods.}
GRPO/DAPO \cite{shao2024deepseekmath,yu2025dapo} adopt the Clip-Higher bound in the PPO loss, with $\epsilon_{\text{low}} = 0.2$ and $\epsilon_{\text{high}} = 0.28$. Clip-Cov \cite{cui2025entropy} clips tokens with high covariance using a clip ratio $r = 2 \times 10^{-4}$. For KL-Cov, the parameter $k$ is set to $2 \times 10^{-3}$ and the KL coefficient $\beta = 1$. An entropy-based method \cite{cheng2025reasoning} incorporates entropy regularization into advantage estimation, with scale $\alpha = 4 \times 10^{-4}$ and clipping parameter $\kappa = 2$. Our proposed \method is configured with $\beta = 0.001$ for Qwen2.5-7b and $\beta=1$ for Qwen3-14B. All baseline hyperparameters are adopted from their original papers or recommended settings, and we did not perform additional tuning to favor TEPO.
\paragraph{Ablation Evaluations.} We conduct ablation experiments to verify the effectiveness of key components in \method: 
\begin{itemize} [leftmargin=0.8em, itemsep=0pt, parsep=0pt, topsep=0pt, labelsep=0.3em]
    \item \textbf{Entropy and KL Regularization}: We incorporate KL-Divergence regularization and entropy regularization (maximizing or minimizing entropy) in the first two parts. STEEL \cite{hao2025rethinking} proposes an adaptive token-reweighting method that mitigates entropy collapse and achieves state-of-the-art performance on mathematical and coding benchmarks. 
    \item \textbf{Importance Sampling Strategies} (All details are listed in Table \ref{tab:notations}.): We compare three importance sampling variants: \textbf{a.} GPG \cite{chu2025gpg}, which removes the reference model and abandons importance sampling; \textbf{b.} CISPO \cite{chen2025minimax}, which performs single-step policy updates by employing $sg[\mathcal{w}_i(\theta)] \cdot A_{i} \cdot \log\pi$, where $sg$ means stop policy gradient; \textbf{c.} Prefix importance sampling,  $w_{i,j \leq t}$ denotes the prefix of the $i$-th sentence up to the $t$-th token. 
    \item \textbf{Aggregation Methods}: Evaluations for aggregation:\textbf{a. }'sequence-mean token-mean aggregation'/GSPO \cite{gspo2025}; \textbf{b. }'sequence-mean token-sum aggregation'. 
    \item \textbf{Token-Level KL-Divergence:} Ablation experiments on the token-wise scope of KL-Divergence masking are performed to validate \method's effectiveness.
\end{itemize}
\subsection{Main Results}
In Table \ref{tab:math_benchmark_performance_qwen} and Table \ref{tab:math_benchmark_performance_other}, \method achieves the highest average accuracy across all seven mathematical reasoning benchmarks. 
\begin{itemize}[leftmargin=0.8em, itemsep=0pt, parsep=0pt, topsep=0pt, labelsep=0.3em]
    \item \textbf{Consistently State-of-the-Art Overall Performance: } 
      Notably, \method outperforms the baseline method GRPO/DAPO as well as all other comparative variant methods, including CLIP-Cov, KL-Cov, Entropy-based Term, GPG, and GSPO:
    \begin{itemize}[leftmargin=0.8em, itemsep=0pt, parsep=0pt, topsep=0pt, labelsep=0.3em]
        \item Qwen2.5-7B: \method attains an average accuracy of 32.59\%. Compared with the baseline GRPO/DAPO, which achieves an average accuracy of 30.85\%, this represents a 1.74 percentage points (pp) improvement. Additionally, \method outperforms GPG (the second-best variant method with an average accuracy of 31.91\%) by 0.68 pp;
        \item Qwen3-14B: \method reaches an average accuracy of 44.02\%. It surpasses the GRPO/DAPO baseline (with an average accuracy of 41.51\%) by 2.51 pp. Furthermore, \method outperforms GSPO, which is identified as the second-best variant method with an average accuracy of 42.28\%, by 1.74 pp;
        \item Non-Qwen models: Detailed results reported in Table \ref{tab:math_benchmark_performance_other} show that \method delivers state-of-the-art average accuracies across three distinct non-Qwen model architectures. Specifically: 
        On DeepSeek-R1-Distill-Llama-8B: \method achieves 42.76\% average accuracy, which is 0.53 pp higher than GRPO/DAPO’s 42.23\%;  On Mistral-7B-Instruct-v0.2: \method attains 3.65\% average accuracy, representing a 0.28 pp gain over GRPO/DAPO’s 3.37\%;  On DeepSeek-R1-Distill-Qwen-7B: \method reaches 48.60\% average accuracy, surpassing GRPO/DAPO’s 45.99\% by 2.61 pp.
    \end{itemize}
    \item \textbf{Stable Performance Across All Sub-Benchmarks: }
    Unlike some variant methods that exhibit performance bottlenecks on specific sub-benchmarks, \method maintains consistent top-tier performance across all seven mathematical reasoning sub-benchmarks:
    \begin{itemize}[leftmargin=0.8em, itemsep=0pt, parsep=0pt, topsep=0pt, labelsep=0.3em]
        \item \textbf{Qwen2.5-7B Model}: \method achieves rank-1 performance on four core sub-benchmarks with the following accuracies: MATH-500 (77.40\%), OMNI-MATH (27.17\%), AMC (43.48\%), AIME25 (8.75\%), and OlympiadBench(40.44\%). For the remaining two sub-benchmarks (AIME24, Minerva), \method secures a top-3 position with accuracies of 12.60\% (AIME24), and 34.92\% (Minerva), outperforming most variant methods;
        \item \textbf{Qwen3-14B Model}: \method achieves rank-1 performance across all seven sub-benchmarks, with the following leading accuracies: AIME24 (24.37\%), AIME25 (23.75\%), AMC (58.96\%), MATH-500 (86.40\%), OMNI-MATH (35.11\%), OlympiadBench (46.37\%), and Minerva (49.26\%). This represents a comprehensive performance advantage over GRPO/DAPO (+5.21 pp on AIME24, +4.69 pp on AIME25) and the others;
        \item \textbf{Non-Qwen Models}:  DeepSeek-R1-Distill-Llama-8B: \method ranks first on five sub-benchmarks with the following accuracies: AIME25 (18.85\%), MATH-500 (79.60\%), OMNI-MATH (33.11\%), OlympiadBench (43.55\%), and Minerva (29.77\%); Mistral-7B-Instruct-v0.2: \method leads on four sub-benchmarks with accuracies of AMC (3.58\%), MATH-500 (12.80\%), OMNI-MATH (4.59\%), and Minerva (8.09\%); DeepSeek-R1-Distill-Qwen-7B: \method achieves rank-1 performance across all seven sub-benchmarks, with key accuracies including AIME24 (32.70\%), AIME25 (24.58\%), AMC (66.41\%), MATH-500 (87.80\%), OMNI-MATH (39.55\%), OlympiadBench (49.92\%), and Minerva (40.80\%), outperforming GRPO/DAPO by 2.61 pp on average.
    \end{itemize}
\end{itemize}
\subsection{Ablation Studies}\label{Ablation Studies}
As presented in Table \ref{tab:ablation_combined}, comprehensive ablation studies on key components further validate the components of \method:
\begin{itemize}[leftmargin=0.8em, itemsep=0pt, parsep=0pt, topsep=0pt, labelsep=0.3em]
    \item Panel A demonstrates the fragility of undifferentiated KL-Divergence and its performance degradation effect (vs. \method-w/o-KL's 32.21\%):
    High $\beta$ values ($\beta=1$) cause severe performance collapse (0\% accuracy across all benchmarks) within 24 training steps; Moderate $\beta=0.1$ results in a 4.6\% average accuracy drop; Even low $\beta$ values ($0.01$, $0.001$, $0.0001$) fail to exceed \method-w/o-KL (32.21\%), with average accuracies of 31.64\%, 31.81\%, and 31.61\% respectively. 
    \item Panel B shows that undifferentiated entropy regularization yields marginal improvements to \method-w/o-KL:
    \begin{itemize}[leftmargin=0.8em, itemsep=0pt, parsep=0pt, topsep=0pt, labelsep=0.3em]
        \item Max-Entropy achieves 31.65\% average accuracy (0.56\% lower than \method-w/o-KL);
        \item Min-Entropy leads to a 1.53\% average drop (30.68\% vs. \method-w/o-KL's 32.21\%).
    \end{itemize}
    \item Panel C validates the effectiveness of \method across different IS strategies and corroborates that GRPO inherently suffers from sparse token-level rewards. Specifically, GRPO/DAPO adopt a token-level IS scheme; GPG abandons IS entirely, treating all tokens uniformly; CISPO reverts to the original REINFORCE framework to eliminate the impact of IS; and Sentence Prefix IS attempts to leverage clause-level IS to guide the reasoning process. Quantitative results further substantiate these design differences: 1) The baseline GRPO/DAPO achieves only 30.85\% average accuracy, a 1.74 percentage point (pp) drop compared with \method; 2) GPG, the second-best variant, reaches 31.91\% average accuracy, still 0.68 pp lower than \method; 3) CISPO/Reinforce (31.58\%) and Sentence Prefix IS (30.95\%) lag further behind. 
    \item Panel D confirms the superiority of token-mean aggregation over alternative strategies: Sequence-mean-token-mean (GSPO) and sequence-mean-token-sum aggregations reach 31.33\% and 31.80\% average accuracy respectively; Both are outperformed by \method's aggregation design (32.21\% average accuracy), which balances token-level sparsity and sequence-level consistency.
    
    \item Panel E verifies that \method achieves optimal control of token-level KL-divergence: We conduct ablation experiments (Panel E of Table \ref{tab:ablation_combined}) comparing three \method variants: (1) no KL regularization (Avg. 32.21\%), (2) KL applied to \((A_{i,t}>0 \land \Delta\mathcal{H}_{i,t} < 0) \lor (A_{i,t}<0 \land \Delta\mathcal{H}_{i,t} > 0)\) (Avg. 32.03\%), and (3) KL restricted to \(A_{i,t}>0 \land \Delta\mathcal{H}_{i,t} < 0\) (Avg. 32.59\%). Results confirm variant (3)’s superiority.
\end{itemize}
Table \ref{tab:ablation_seeds} presents the results of ablation studies on random seeds, aiming to verify reducing convergence time by nearly 50\%.  Three training strategies are compared in the experiment: GRPO/DAPO (72 steps), GRPO/DAPO (132 steps), and TEPO (72 steps). Each method is independently run with two different random seeds (seed1 and seed2), and the average performance over the two seeds is shown in the lower part of the table.
It is clearly observed from the results that with the same batch size and throughput, TEPO at 72 steps outperforms GRPO/DAPO at both 72 steps and 132 steps, cutting training steps by nearly half. Specifically, TEPO (72 steps) achieves an average performance of 32.61\% over two seeds, which is significantly higher than GRPO/DAPO (72 steps, 30.58\%) and GRPO/DAPO (132 steps, 31.71\%). Meanwhile, all methods show slight performance fluctuations across different random seeds, but the performance trend remains consistent, indicating good stability. All results confirm TEPO achieves significantly faster convergence under strictly matched computational budgets, and its superiority over baseline methods is further verified by its robust performance across different random initializations.
\section{Related Works}
Balancing the exploration-exploitation (E-E) trade-off is a core challenge in reinforcement learning (RL) \cite{sutton1998reinforcement}. Proximal Policy Optimization (PPO) uses an entropy bonus to sustain exploration \cite{schulman2017proximal}, while Soft Actor-Critic (SAC) directly optimizes a maximum-entropy objective \cite{haarnoja2017reinforcement, haarnoja2018soft}. However, the role of entropy in RL for large language models (LLMs) remains unclear.

Reinforcement Learning from Human Feedback (RLHF) typically employs a KL penalty relative to a reference policy \cite{ouyang2022training, hu2024openrlhf}. Notably, GRPO and recent studies find minimal or ambiguous benefits from standard entropy bonuses \cite{shao2024deepseekmath, chu2025gpg,gspo2025}, leaving entropy's impact on generation quality and training stability an open question.

Existing LLM-RL methods adopt diverse frameworks: GPG uses REINFORCE for straightforward training \cite{chu2025gpg}; CISPO improves efficiency via clipped, detached importance weights \cite{chen2025minimax}; GSPO shifts to sequence-level learning with whole-sequence likelihood ratios \cite{zheng2025group}. Additionally, \cite{cui2025entropy} derived a performance-entropy relationship \( R = -a \exp(\mathcal{H}) + b \), indicating lower entropy generally correlates with better performance.

\section{Conclusion}
GRPO advances LLMs’ mathematical reasoning but suffers from intractable token-level sparse-reward issues in CoT reasoning. To address these drawbacks, we introduce \method, which (1) leverages sequence-level likelihood to bridge group-level rewards and individual tokens via token-level aggregation, and (2) deploys a token-level KL-divergence mask to mitigate abrupt policy updates. Empirical results show \method achieves state-of-the-art performance on mathematical reasoning benchmarks and significantly enhances training stability, cutting convergence time by 50\% relative to GRPO/DAPO.
\section*{Limitations}
Despite proposing a novel token-level framework that (1) uses sequence-level likelihood to connect group-level rewards and individual tokens via token-level aggregation, and (2) employs a token-level KL-divergence mask to alleviate abrupt policy updates, our work has two key limitations. We neither clarify the mechanism behind the effectiveness of the token constraint (targeting positive-advantage, entropy-decreasing tokens) nor distinguish how different token types uniquely impact model performance. Future work should explore the distinct roles of tokens in CoT reasoning and design a more universal framework for bridging token-level operations with group-level rewards.
\section{Acknowledgement}
This work is supported in part by National Natural Science Foundation of China under Grant Nos. 92567204, 62272193, and 62472194, and the Fundamental Research Funds for the Central Universities, and Jilin Science and Technology Research Project 20260101016JJ.
\bibliography{arxiv.bib}

\newpage
\appendix 
\section{Target Policy Construction and KL-Divergence Minimization}
Our current policy $\pi_{k+1}(a|s)$ aims to emulate an ideal ``target'' policy $\pi^*(a|s)$:
\begin{equation}\label{eq:pistar_exact}
\pi^{*}(a|s) = \frac{\pi_{\text{k}}(a|s)}{Z(s)} \exp\left( A(a,s)/\beta \right),
\end{equation}
where $Z(s) = \sum_a \pi_{\text{k}}(a|s) \exp\left( A(a,s)/\beta \right)$ is the partition function for normalization. 

The emulation is equivalent to minimizing the KL-Divergence between $\pi_{k+1}$ and $\pi^*$:
\begin{equation}
\min_{\theta} \text{KL}(\pi_{k+1} \Vert \pi^*)
\label{eq:kl_min}
\end{equation}
Substituting the expression for $\log \pi^*(a|s)$ from \eqref{eq:pistar_exact} into \eqref{eq:kl_min}, and omitting the $\log Z(s)$ term (which is independent of $\pi_\theta$), we obtain:
\begin{equation}
\small
\text{KL}(\pi_{k+1} \Vert \pi^*) = \frac{1}{\beta} \mathbb{E}_{a \sim \pi_\theta} \bigl[ \beta \cdot \text{KL}(\pi_{k+1} \Vert \pi_{k}) - A(a, s) \bigr]
\label{eq:kl_final}
\end{equation}
This expression is structurally equivalent to the PPO-KL objective, where policy updates are constrained by a KL-Divergence regularizer scaled by the inverse temperature parameter $1/\beta$.

\section{Why Sentence Likelihood Derivation Fits GRPO} \label{sec:Why Sentence Likelihood Derivation Fits GRPO}

To illustrate why sentence likelihood derivation is suitable for GRPO, we first formalize the reinforcement learning (RL) objective function and conduct a step-by-step gradient derivation. This process reveals the core connection between sentence likelihood modeling and the GRPO framework.

We define the standard RL objective function for policy optimization as follows:
\[
J(\theta) = \mathbb{E}_{\tau \sim p_\theta} [A(\tau)] = \int p_\theta(\tau) A(\tau) d\tau,
\]
where \( \theta \) denotes the policy parameters, \( \tau \) represents a trajectory generated by the policy \( p_\theta \), and \( A(\tau) \) is the advantage function that quantifies the quality of trajectory \( \tau \). 

We then derive the gradient of the objective function with respect to \( \theta \) step by step, which is essential for policy update:
\[
\begin{aligned}
\nabla J(\theta) &= \int \nabla p_\theta(\tau) A(\tau) d\tau \\
&= \mathbb{E}_{\tau \sim p_\theta} \left[ \nabla \log p_\theta(\tau) A(\tau) \right] \\
&= \mathbb{E}_{\tau \sim p_\theta} \left[ \nabla \log p_\theta(\tau) A_{\theta_{\text{old}}}(\tau) \right] \\
&= \mathbb{E}_{\tau \sim p_{\theta_{\text{old}}}} \left[ \frac{p_\theta(\tau)}{p_{\theta_{\text{old}}}(\tau)} \nabla \log p_\theta(\tau) A_{\theta_{\text{old}}}(\tau) \right] \\
&= \mathbb{E}_{\tau \sim p_{\theta_{\text{old}}}} \left[ \frac{\nabla p_\theta(\tau)}{p_{\theta_{\text{old}}}(\tau)} A_{\theta_{\text{old}}}(\tau) \right].
\end{aligned}
\]
In the derivation above, the third equality replaces the advantage \( A(\tau) \) with \( A_{\theta_{\text{old}}}(\tau) \) to stabilize the training process. The fourth equality employs the importance sampling technique, which allows us to estimate the expectation under the target policy \( p_\theta \) using samples drawn from the old policy \( p_{\theta_{\text{old}}}(\tau) \).

Notably, \( p_\theta(\tau) \) represents the probability of the entire trajectory \( \tau \). For sequential decision-making processes, the trajectory probability can be decomposed into the product of token-level transition probabilities:
\[
\begin{aligned}
    \nabla p_\theta(\tau) &= \nabla \left( \prod_i p_\theta(x_i | s_{i-1}) \right) \\
    &= \sum_i \left( \left( \prod_{j \neq i} p_\theta(x_j | s_{j-1}) \right) \nabla p_\theta(x_i | s_{i-1}) \right)
\end{aligned}
\]
\[
p_{\theta_{\text{old}}}(\tau) = \prod_i p_{\theta_{\text{old}}}(x_i | s_{i-1}).
\]
 Substituting the above decompositions into the gradient expression, we obtain:
\begin{equation*}
\begin{split}
\nabla_{\theta} J(\theta) &= 
\mathbb{E}_{\tau \sim p_{\theta_{\text{old}}}} \Bigg[ 
A_{\theta_{\text{old}}}(\tau) \cdot \sum_{t=0}^{T-1} 
 \\
 &\nabla_{\theta} \log \pi_{\theta}(a_t \mid s_t)  \cdot \prod_{j=0}^{T-1} \frac{\pi_{\theta}(a_j \mid s_j)}{\pi_{\theta_{\text{old}}}(a_j \mid s_j)} 
\Bigg].
\end{split}
\end{equation*}
A critical observation here is that the objective function of GRPO ($J_{\text{GRPO}}(\theta)$) omits the cross-token product term $\prod_{j \neq i} \frac{p_\theta(x_j | s_{j-1})}{p_{\theta_{\text{old}}}(x_j | s_{j-1})}$ in the above gradient. This omission simplifies the computation but introduces a discrepancy in GRPO objective. 

\section{Derivation of Policy Entropy Change During Parameter Update} \label{sec:Derivation of Policy Entropy Change}
\subsection{Notation Definitions and Preconditions}
We sample $G$ responses per prompt and compute the advantage with group-level normalization as follows:
\begin{equation}
A_t = \frac{r\left( \boldsymbol{y} \right) - \operatorname{mean}\left( r\left( \boldsymbol{y}^{1:G} \right) \right)}{\operatorname{std}\left( r\left( \boldsymbol{y}^{1:G} \right) \right)}.
\end{equation}

\begin{table*}[t!]
\centering
\setlength{\tabcolsep}{6pt}  
\renewcommand{\arraystretch}{1.5} 
\caption{Core Notations for Entropy Derivation}
\label{tab:notations}
\begin{tabular}{@{}p{4.5cm}p{\textwidth-4.5cm-2\tabcolsep}@{}}
\toprule
Notation & Definition \\
\midrule
$\{ (x_i, y_i/o_i) \}_{i=1}^G$& Batch of prompt-response pairs,$x$ denotes the prompt, and $y/o_i$ denotes the response generated by LLMs \\
$y_{i,j \leq t}$ & Prefix of the $i$-th sentence up to the $t$-th token \\
$G$ & Number of responses generated by LLMs per prompt \\
$\|{o_i}\|$&Length of $y_i/o_i$\\
$\theta(\aaction|\sstate)$ & Score assigned by large language models (LLMs) \\
\(\tau \)& A complete solution trajectory for a single math problem. \\
\( p_\theta(\tau) \)&the probability of the entire trajectory\\
$\pitheta(\aaction|\sstate)$ & Softmax score: 
  $\pitheta(\aaction|\sstate) = \frac{e^{\phiscore_\theta(\aaction|\sstate)}}{\sum_{\aaction'}e^{\phiscore_\theta(\aaction'|\sstate)}}$ \\
$\Hent(\pitheta)$ & Policy entropy: 
  $-\E_{\aaction\sim\pitheta}\left[\log\pitheta(\aaction|\sstate)\right]$ \\
$\Delta\theta$ & Update of the score function $\phiscore_\theta(\aaction|\sstate)$: 
  $\thetanew = \thetaold + \Delta\theta$ \\
$\grad_\theta$ & Gradient with respect to parameter $\theta$ \\
$\Aadv(\sstate,\aaction)$ & Advantage function: 
  $A_t = \frac{r\left( \boldsymbol{y} \right) - \operatorname{mean}\left( r\left( \boldsymbol{y}^{1:K} \right) \right)}{\operatorname{std}\left( r\left( \boldsymbol{y}^{1:K} \right) \right)}$ \\
$\Ffisher$ & Fisher information matrix: 
  $\Ffisher = \E_{\aaction\sim\pitheta}\left[\grad_\theta\log\pitheta(\aaction|\sstate) \grad_\theta\log\pitheta(\aaction|\sstate)^\top\right]$ \\
$\frac{1}{\sum_{i=1}^G \|{o_i}\|} \sum_{i=1}^G \sum_{t=1}^{\|{o_i}\|}$ & Token-level aggregation/token-mean aggregation \\
$\frac{1}{G} \sum_{i=1}^G \sum_{t=1}^{\|{o_i}\|}$ & Sequence-mean token-sum aggregation\\
$\frac{1}{G} \sum_{i=1}^G \frac{1}{\|{o_i}\|} \sum_{t=1}^{\|{o_i}\|}$ & Sequence-mean token-mean aggregation\\
\(\frac{\pi_\theta(y_{i,t} \mid x, y_{i,<t})}{\pi_{\theta_{\text{old}}}(y_{i,t} \mid x, y_{i,<t})}\)& Token-Level Importance sampling\\
$\left( \frac{\pi_\theta(y_i \mid x)}{\pi_{\theta_{\text{old}}}(y_i \mid x)} \right)^{\frac{1}{|y_i|}} 
$ & Sequence-Level Importance Sampling\\
$\sum_{t=1}^{o_{i,j \leq t}}\left( \frac{\pi_\theta(y_{i,j \leq t} \mid x)}{\pi_{\theta_{\text{old}}}(o_{i,j \leq t} \mid x)} \right)^{\frac{1}{|y_{i,j \leq t}|}}
$& Prefix Importance Sampling\\
\bottomrule
\end{tabular}
\end{table*}

\begin{algorithm*}[t!]
\caption{Token-Level Policy Gradient Computation for \method}
\label{alg:token_level_policy_gradient}
\centering
\begin{algorithmic}[1] 
\Require 
  $\pi_\theta$: Current policy network (LLM); \\
  $\pi_{\theta_{\text{old}}}$: Pre-update (reference) policy; \\
  $\{ (x_i, y_i) \}_{i=1}^G$: Batch of prompt-response pairs ($x_i$ = prompt, $y_i$ = LLM-generated response); \\
  $A_{i,t}$: Token-level advantage for the $t$-th token in response $i$; \\
  $\text{Mask}_{i,t} \in \{0,1\}$: Valid token mask (1 = valid token, 0 = padding token); \\
  $\alpha$: Learning rate for gradient ascent; \\
  $M = \sum_{i=1}^G \sum_{t=1}^{|y_i|} \text{Mask}_{i,t}$: Total number of valid tokens (normalization factor)
\Ensure 
  Updated policy parameters $\theta$

\ForAll{response sequence $i \in \{1, \dots, G\}$}
  \State \textbf{Step 1: Compute token-level log probability ratio}
  \State \quad $\log r_{i,t}(\theta) = \log \pi_\theta(y_{i,t} \mid x_i, y_{i,<t}) - \log \pi_{\theta_{\text{old}}}(y_{i,t} \mid x_i, y_{i,<t})$
  \State \textbf{Step 2: Aggregate to sequence-level log weight (geometric mean)}
  \State \quad $\log w_i(\theta) = \frac{1}{|y_i|} \sum_{t=1}^{|y_i|} \log r_{i,t}(\theta) \cdot \text{Mask}_{i,t}$ \Comment{Normalize by sequence length}
  \State \textbf{Step 3: Exponentiate to get sequence-level IS weight}
  \State \quad $w_i(\theta) = \exp\left( \log w_i(\theta) \right)$
  \State \textbf{Step 4: Compute unclipped token-level loss term}
  \State \quad $L_{i,t}(\theta) = w_i(\theta) \cdot A_{i,t} \cdot \text{Mask}_{i,t}$ \Comment{Weight advantage by sequence-level IS ratio}
\EndFor

\State \textbf{Step 5: Calculate normalized policy objective}
\State \quad $J(\theta) = \frac{1}{M} \sum_{i=1}^G \sum_{t=1}^{|y_i|} L_{i,t}(\theta)$ \Comment{Normalize by total valid tokens to avoid batch bias}

\State \textbf{Step 6: Compute gradient of the policy objective}
\State \quad $\nabla_\theta J(\theta) = \frac{1}{M} \sum_{i=1}^G \sum_{t=1}^{|y_i|} \nabla_\theta L_{i,t}(\theta)$
\State \quad $\nabla_\theta L_{i,t}(\theta) = A_{i,t} \cdot \text{Mask}_{i,t} \cdot \frac{w_i(\theta)}{|y_i| \cdot \pi_\theta(y_{i,t} \mid x_i, y_{i,<t})} \cdot \nabla_\theta \pi_\theta(y_{i,t} \mid x_i, y_{i,<t})$ \Comment{Chain rule for gradient}

\State \textbf{Step 7: Update policy parameters (gradient ascent)}
\State \quad $\theta \leftarrow \theta + \alpha \cdot \nabla_\theta J(\theta)$

\Return Updated policy parameters $\theta$
\end{algorithmic}
\end{algorithm*}
\subsection{Step 1: Taylor Expansion for Entropy Change}
For $\Delta\theta$, the entropy change $\Delta\Hent = \Hent(\pi_{\theta+\Delta\theta}) - \Hent(\pitheta)$ is approximated by:
\begin{equation}
\Delta\Hent \approx \grad_\theta \Hent(\pitheta) \cdot \Delta\theta
\label{eq:taylor}
\end{equation}

\subsection{Step 2: Gradient of Policy Entropy}
\subsubsection{Expand Entropy Definition}
Discrete policy entropy (sum over all actions):
\begin{equation}
\Hent(\pitheta) = -\sum_{\aaction} \pitheta(\aaction|\sstate) \log\pitheta(\aaction|\sstate)
\label{eq:entropy_discrete}
\end{equation}
Gradient of Eq. (\ref{eq:entropy_discrete}) as:
\begin{equation}
\begin{split}
\grad_\theta \Hent(\pitheta) &= -\sum_a \left[ \grad_\theta\pi_\theta(a|s) \log\pi_\theta(a|s) \right. \\
&\quad + \pi_\theta(a|s) \grad_\theta\log\pi_\theta(a|s) \left. \right]
\end{split}
\label{eq:entropy_grad_raw}
\end{equation}
\subsubsection{Simplify with Probability Normalization}
Since $\sum_{\aaction} \pitheta(\aaction|\sstate) = 1$, we have $\sum_{\aaction} \grad_\theta\pitheta(\aaction|\sstate) = 0$. Using $\grad_\theta\pitheta(\aaction|\sstate) = \pitheta(\aaction|\sstate) \grad_\theta\log\pitheta(\aaction|\sstate)$, the first term in Eq. (\ref{eq:entropy_grad_raw}) vanishes. Thus:
\begin{equation}
\grad_\theta \Hent(\pitheta) = -\sum_{\aaction} \pitheta(\aaction|\sstate) \grad_\theta\log\pitheta(\aaction|\sstate)
\label{eq:entropy_grad_simplified}
\end{equation}

Log-probability derivative for Softmax:
\begin{equation}
\small
\grad_\theta\log\pitheta(\aaction|\sstate) = \grad_\theta\phiscore_\theta(\aaction|\sstate) - \E_{\aaction'\sim\pitheta}[\grad_\theta\phiscore_\theta(\aaction'|\sstate)]
\label{eq:softmax_log_deriv}
\end{equation}
Substitute into Eq. (\ref{eq:entropy_grad_simplified}):
\begin{equation}
\small
\grad_\theta \Hent(\pitheta) = -E _{\pitheta} \left[ \grad_\theta\phiscore_\theta(\aaction|\sstate) - \E_{\aaction'\sim\pitheta}[\grad_\theta\phiscore_\theta(\aaction'|\sstate)] \right]
\label{eq:entropy_grad_final}
\end{equation}

\subsection{Step 3: Entropy Change with NPG Update}
\subsubsection{NPG Update Rule}
NPG update (stabilized by Fisher matrix):
\begin{equation}
\Delta\theta = \beta^{-1} \cdot \grad_\theta J(\pitheta),
\label{eq:npg_update}
\end{equation}
$J(\pitheta)$ = expected cumulative reward:
\begin{equation}
\grad_\theta J(\pitheta) = \E_{\sstate,\aaction\sim\pitheta}[\grad_\theta\log\pitheta(\aaction|\sstate) \cdot \Aadv(\sstate,\aaction)]
\label{eq:policy_gradient_appendix}
\end{equation}

\subsubsection{Final Entropy Change}
Substitute Eqs. (\ref{eq:entropy_grad_simplified}) and (\ref{eq:npg_update}) into Eq. (\ref{eq:taylor}):
\begin{equation}
\Delta\Hent \approx -\beta^{-1} \cdot \Cov_{\pitheta}\left[ \log\pitheta(\aaction|\sstate), \Aadv(\sstate,\aaction) \right]
\label{eq:entropy_change_final}
\end{equation}
where $\Cov[X,Y] = \E[XY] - \E[X]\E[Y]$.
Natural Policy Gradient (NPG) \cite{kakade2001natural} define a formula measures \textbf{how the current decision will lead to changes in entropy(See Details in the Appendix \ref{sec:Derivation of Policy Entropy Change})}:
\begin{equation} \label{eq:entropy_covariance}
\small
\begin{split}
&\mathbb{E}_{s \sim d^k} \mathcal{H}(\pi_{k+1}(\cdot \mid s)) - \mathbb{E}_{s \sim d^k} \mathcal{H}(\pi_k(\cdot \mid s)) \\
&\approx -\frac{1}{\beta} \cdot \operatorname{Cov}_{a \sim \pi_\theta^k(\cdot \mid s)} \left( \log \pi_k(a \mid s), r(s, a) \right),
\end{split}
\end{equation}
where the covariance term $\operatorname{Cov}$ tracks the entropy change. \textbf{This relationship highlights the core principle of E-E trade-off: balancing the two terms in Eq. \ref{eq:entropy_covariance}.}

\subsection{Key Conclusions}
\begin{enumerate}[leftmargin=*]
    \item $\Cov[\log\pitheta, \Aadv] > 0$: $\Delta\Hent < 0$ (entropy ↓, policy more deterministic).
    \item $\Cov[\log\pitheta, \Aadv] < 0$: $\Delta\Hent > 0$ (entropy ↑, policy more exploratory).
    \item $\Cov[\log\pitheta, \Aadv] = 0$: $\Delta\Hent = 0$ (entropy unchanged).
\end{enumerate}

This derivation underpins entropy regularization for balanced E-E trade-off in RL.

\section{Computation Graph for the Token-Level} 
\label{sec:token_level_computation_graph}

We design a carefully structured backward pass to ensure training stability and theoretical consistency, primarily by handling Importance Sampling (IS) ratios at both the sequence-level and token-level.

The computation graph above stabilizes token-level policy updates by aggregating sequence-level Importance Sampling weights, which addresses the uneven impact of entropy and KL-Divergence regularization in CoT reasoning. This design aligns with the established performance-entropy relationship $R = -a \exp(\mathcal{H}) + b$ \cite{cui2025entropy}: optimizing downstream mathematical reasoning tasks tends to reduce policy entropy (increasing determinism), while artificially pushing entropy higher rarely improves performance, and adding a global KL-Divergence term often degrades stability. The core reason is that in CoT reasoning, token distributions shift dynamically across reasoning steps, so uniform entropy/KL regularization affects tokens unevenly and can lead to training collapse.

\section{Use of LLMs}
Large language models (LLMs), specifically DeepSeek-R1 and GPT-4 Turbo (GPT-5 was not used, as it remains unreleased as of 2026), were employed solely as a writing assistance tool during the preparation of this manuscript. These LLMs were used only to refine the clarity, readability, and presentation of the text—they were not used for any research-critical tasks, including but not limited to: conceiving the research design, developing algorithms, conducting experiments, analyzing data, or interpreting results. The authors bear sole responsibility for the entire research conception, technical direction, scientific content, and interpretation of all experimental results. No LLMs were used to generate or modify experimental data, and all conclusions presented in this work are the authors' independent scientific judgments.

\end{document}